\newcommand{\akshay}[1]{}
\newenvironment{condenum}
 {\begin{enumerate}[ref=\thecondition.\arabic*]}
 {\end{enumerate}}
\title{\Large Myopic Bayesian Design of Experiments via \\
Posterior Sampling and Probabilistic Programming}
\author{
Kirthevasan Kandasamy \\
Carnegie Mellon University \\
\incmtt{kandasamy@cs.cmu.edu}
\And
Willie Neiswanger \\
Carnegie Mellon University \\
\incmtt{willie@cs.cmu.edu}
\And
Reed Zhang \\
Carnegie Mellon University \\
\incmtt{rrz@andrew.cmu.edu}
\And
Akshay Krishnamurthy \\
U-Mass, Amherst \\
\incmtt{akshay@cs.umass.edu}
\And
Jeff Schneider \\
Carnegie Mellon University \\
\incmtt{schneide@cs.cmu.edu}
\And
Barnab\'as P\'oczos \\
Carnegie Mellon University \\
\incmtt{bapoczos@cs.cmu.edu}
}
\begin{document}

\maketitle

\newcommand{\insertAlgoMain}{
\begin{algorithm}
\begin{algorithmic}[1]
\REQUIRE Prior $\pthetazero$ for $\thetatrue$, Conditional distribution
  $\PP(Y|X,\theta)$.
\STATE $\datatt{0}\leftarrow \emptyset$.
\label{step:initdata}
\FOR{$t=1,2,\dots$}
\STATE Sample $\theta\sim\pthetatt{t-1} \equiv \PP(\thetatrue|\datatmo)$.
\label{step:sampletheta}
\STATE Choose $\Xt = \argmin_{x\in\Xcal} \penlplustmo(\theta, \datatmo, x)$.
\label{step:choosext}
\STATE $\YXt\leftarrow$ conduct experiment at $\Xt$.
\label{step:experiment}
\STATE Set $\datat \leftarrow \datatmo \cup \{(\Xt, \YXt)\}$.
\label{step:adddata}
\ENDFOR
\caption{$\;$\bdoes ($\pspolicy$) \label{alg:bdoe}}
\end{algorithmic}
\end{algorithm}
}


\newcommand{\insertFigExpone}{ 
\newcommand{\expfigwidth}{1.41in} 
\newcommand{\expfighsp}{\hspace{-0.04in}}
\newcommand{\insertleftspace}{\hspace{-0.3in}}
\begin{figure*}
\begin{center}
\insertleftspace
\subfloat{
\includegraphics[width=\expfigwidth]{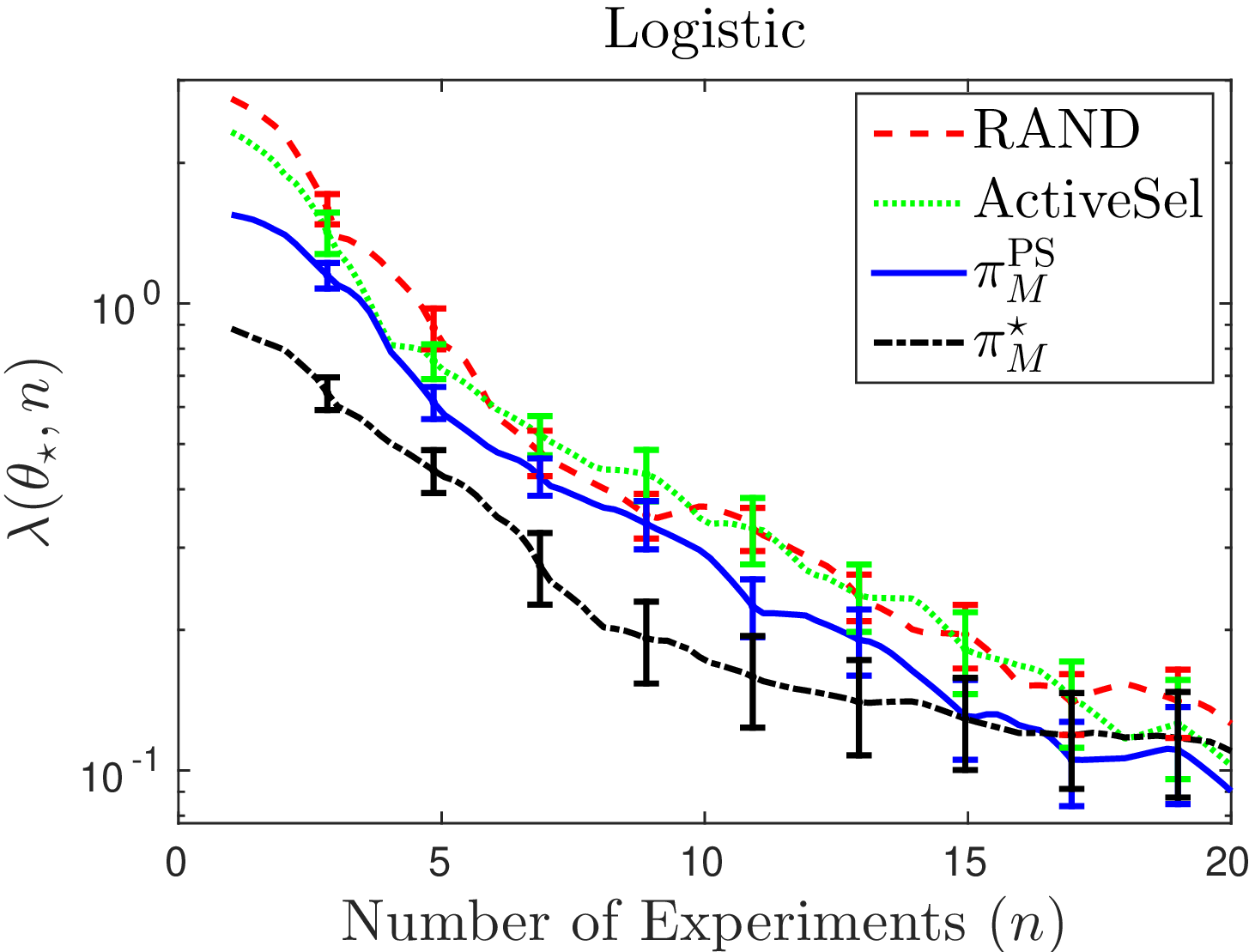}
\label{fig:expeg1}}\expfighsp
\subfloat{
\includegraphics[width=\expfigwidth]{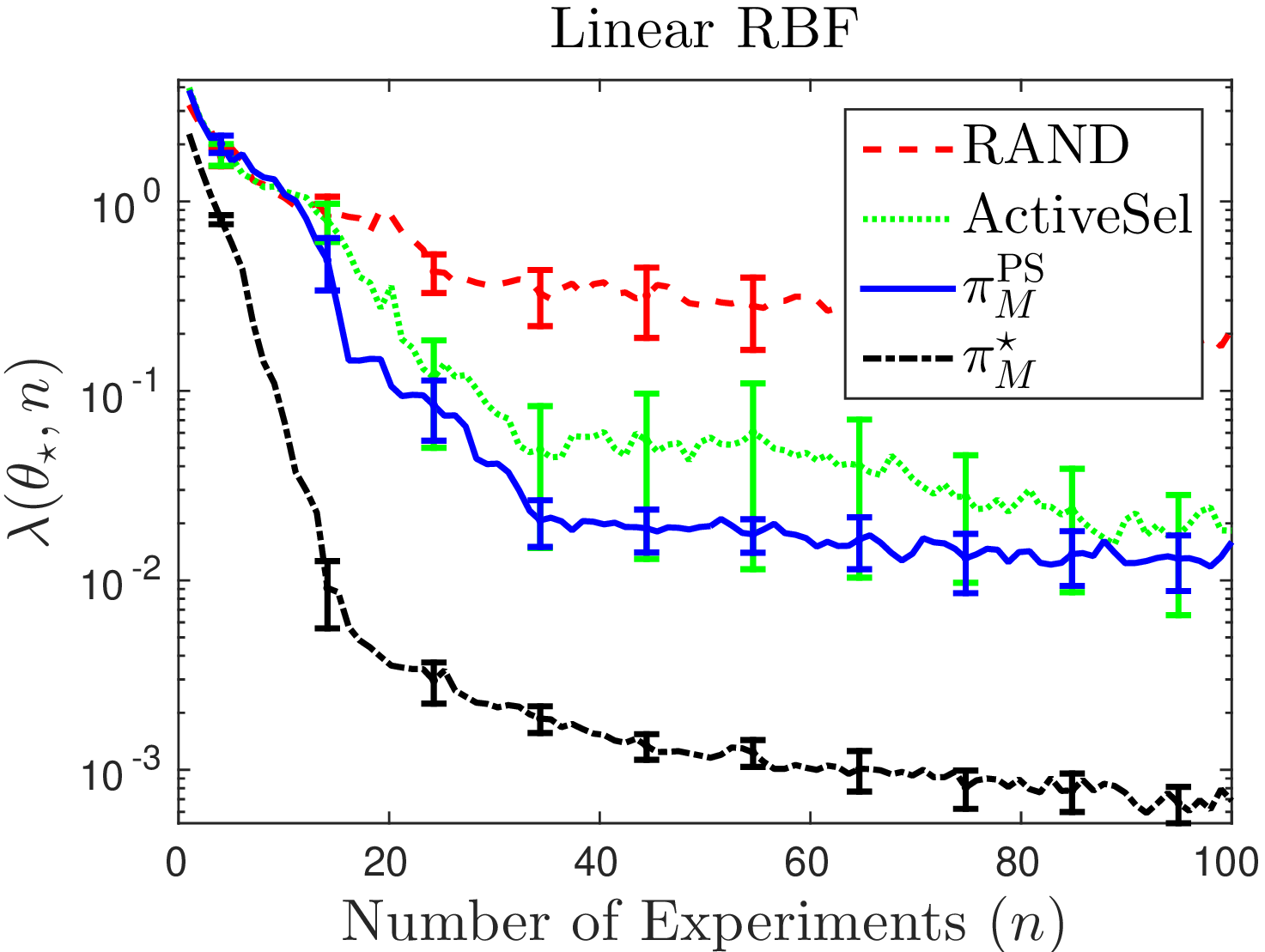}
\label{fig:expeg2}}\expfighsp
\subfloat{
\includegraphics[width=\expfigwidth]{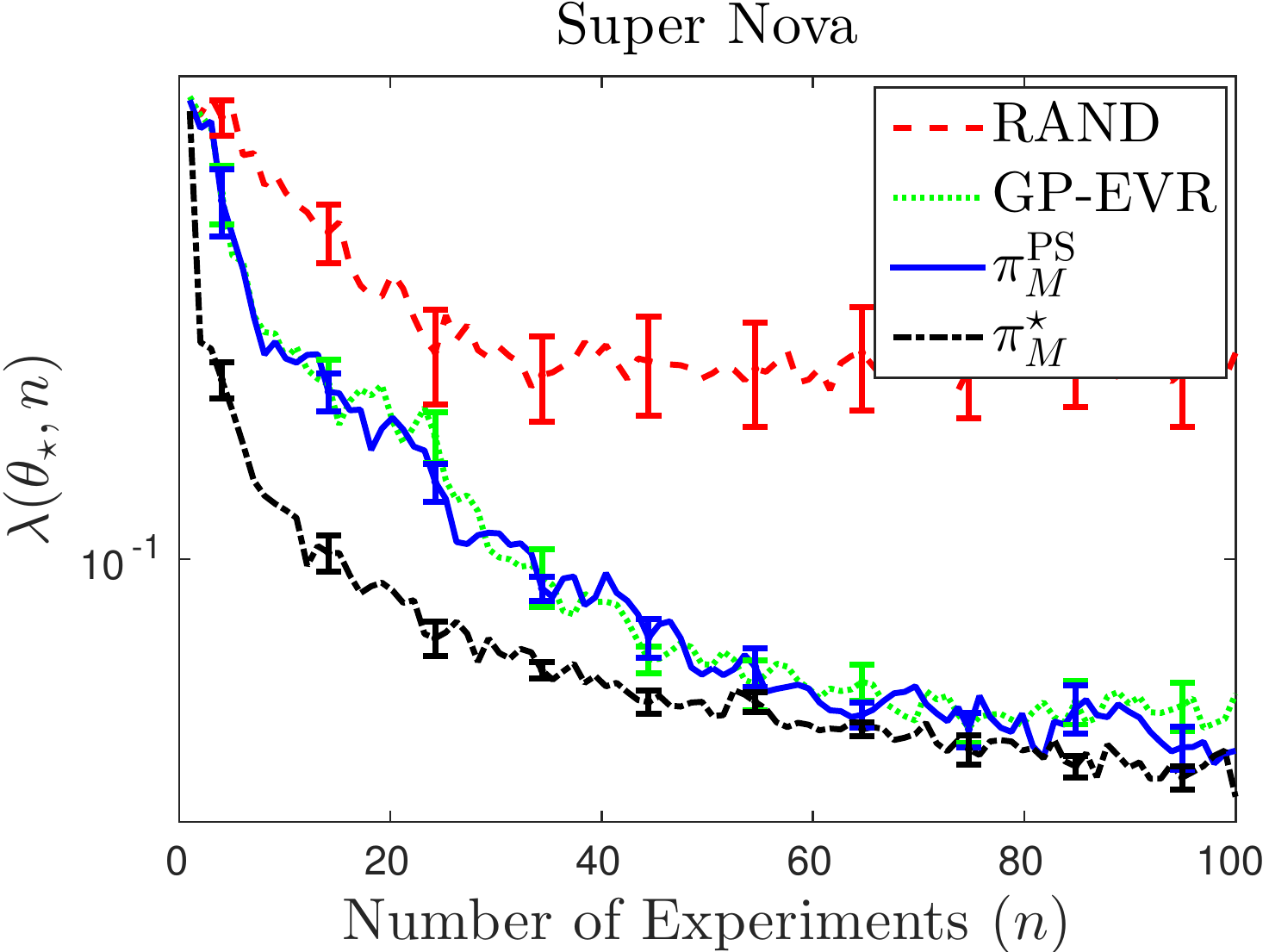}
\label{fig:expeg3}}\expfighsp
\subfloat{
\includegraphics[width=\expfigwidth]{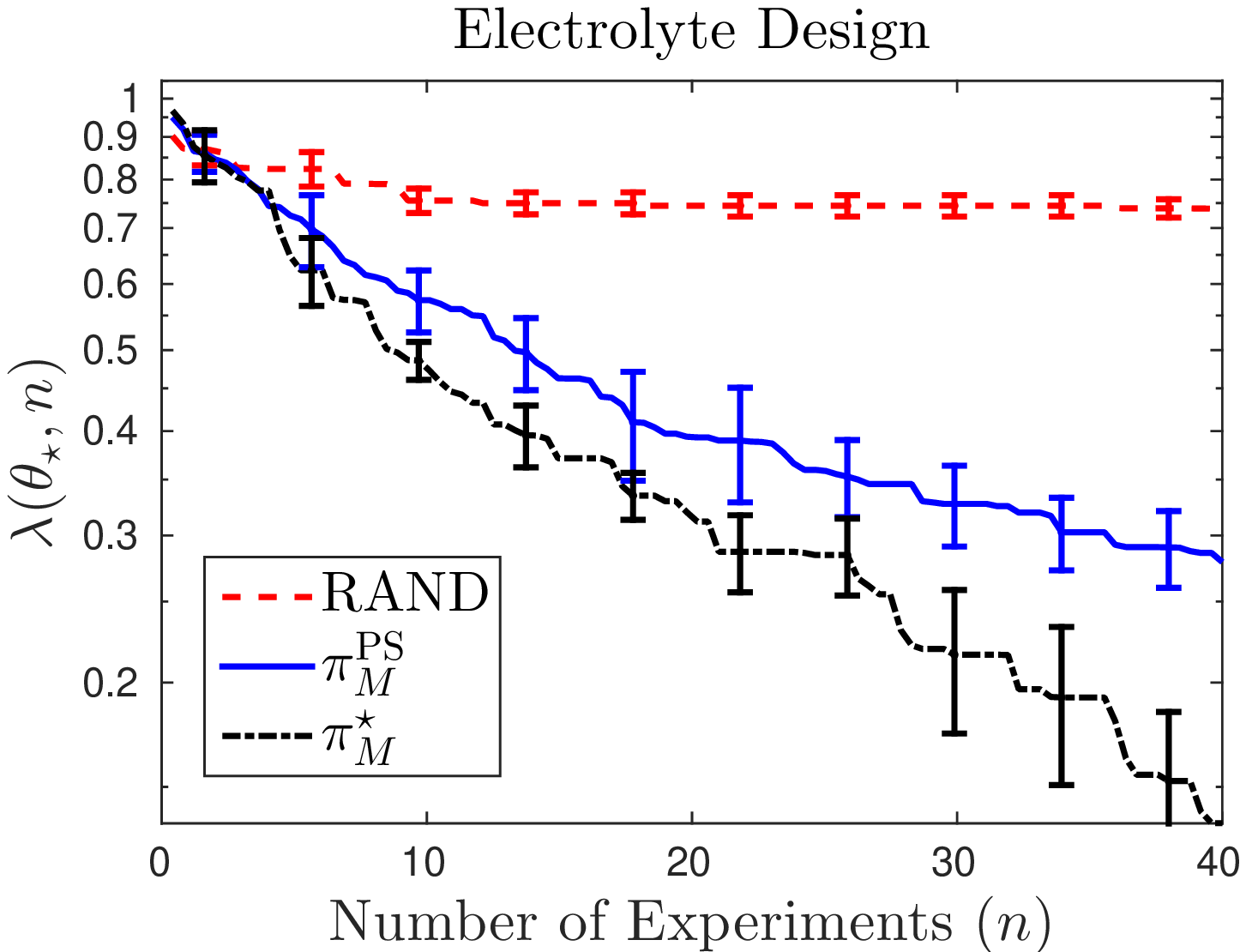}
\label{fig:expeg3}}\expfighsp
\\
\insertleftspace
\subfloat{
\includegraphics[width=\expfigwidth]{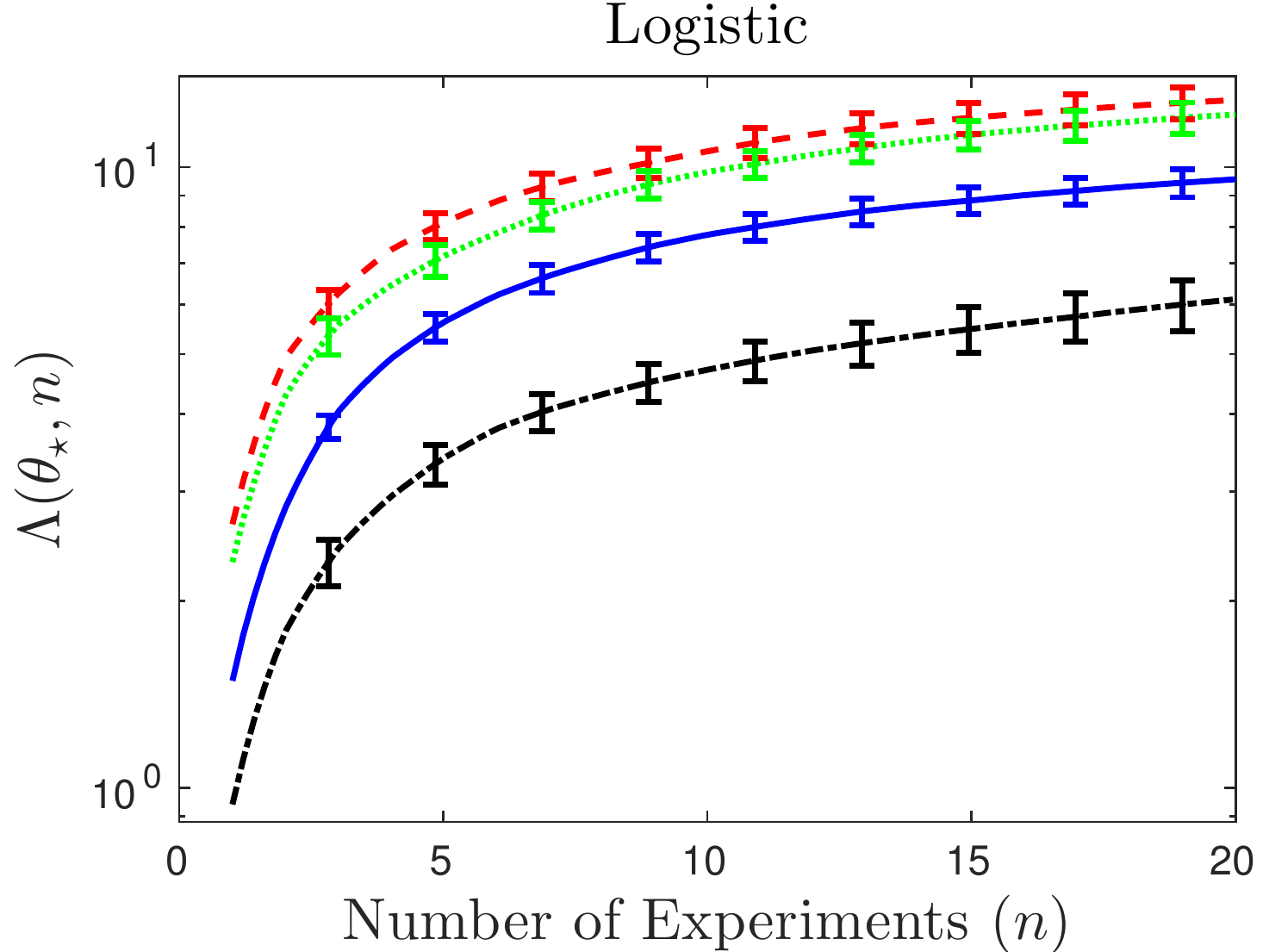}
\label{fig:expeg1}}\expfighsp
\subfloat{
\includegraphics[width=\expfigwidth]{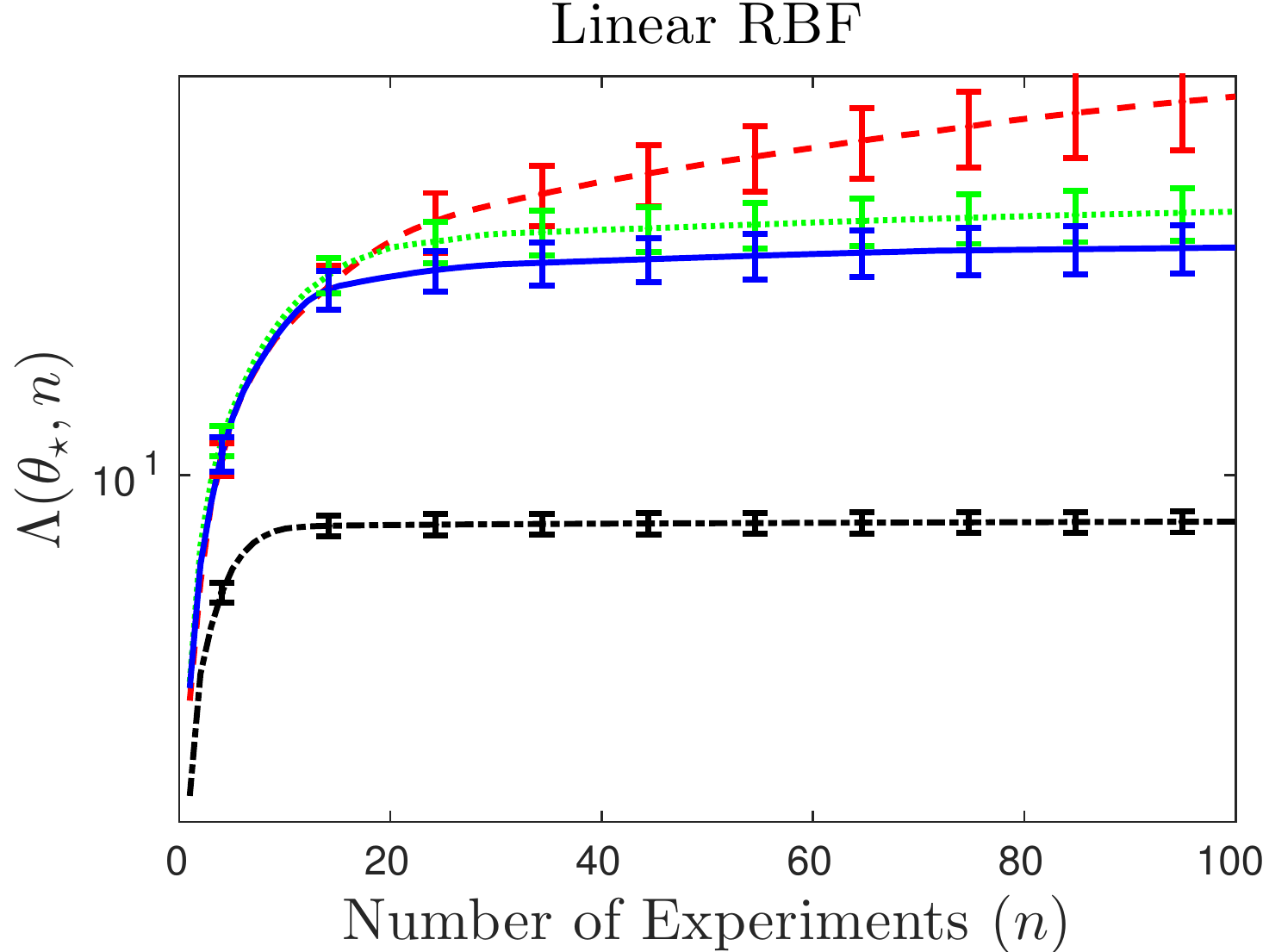}
\label{fig:expeg2}}\expfighsp
\subfloat{
\includegraphics[width=\expfigwidth]{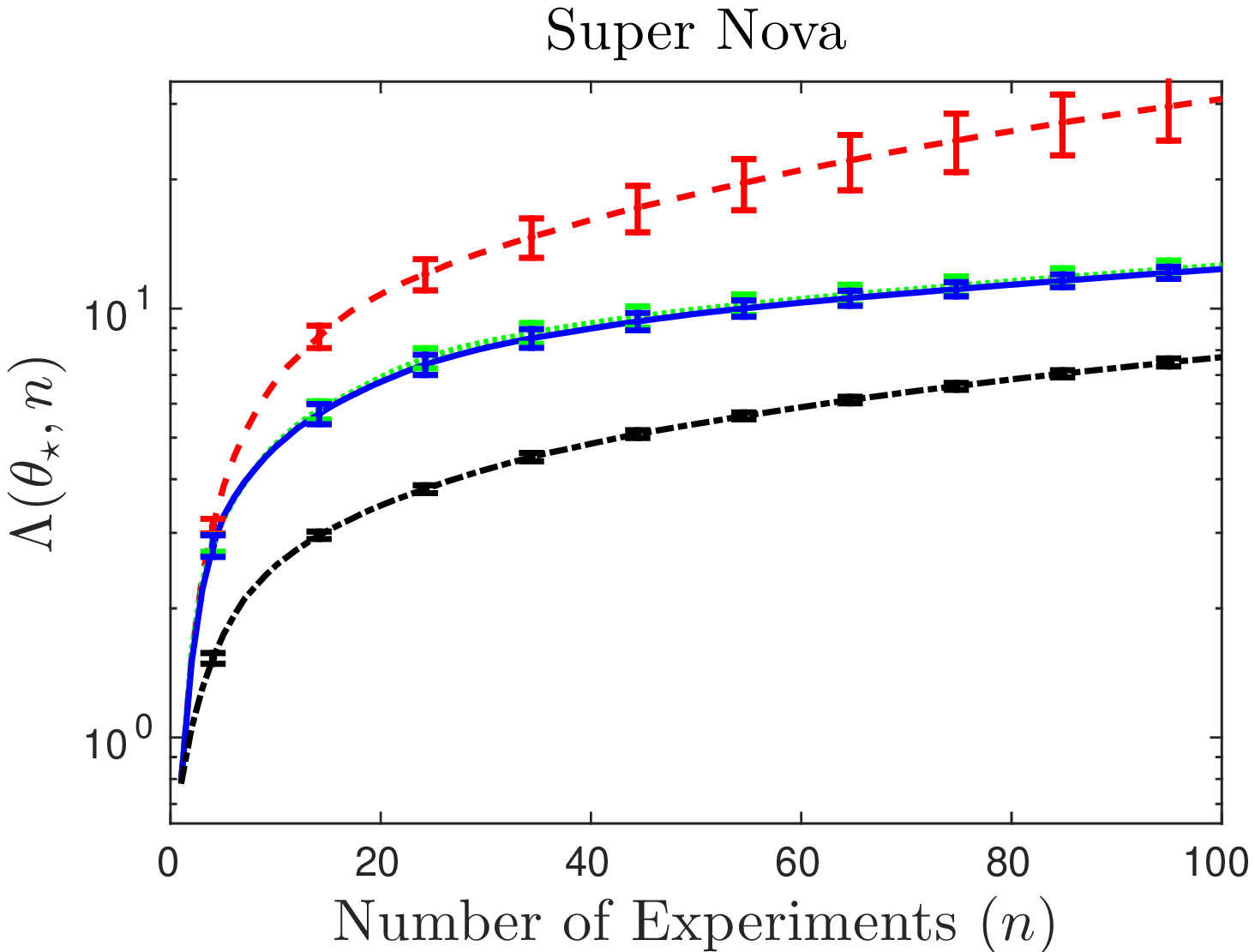}
\label{fig:expeg3}}\expfighsp
\subfloat{
\includegraphics[width=\expfigwidth]{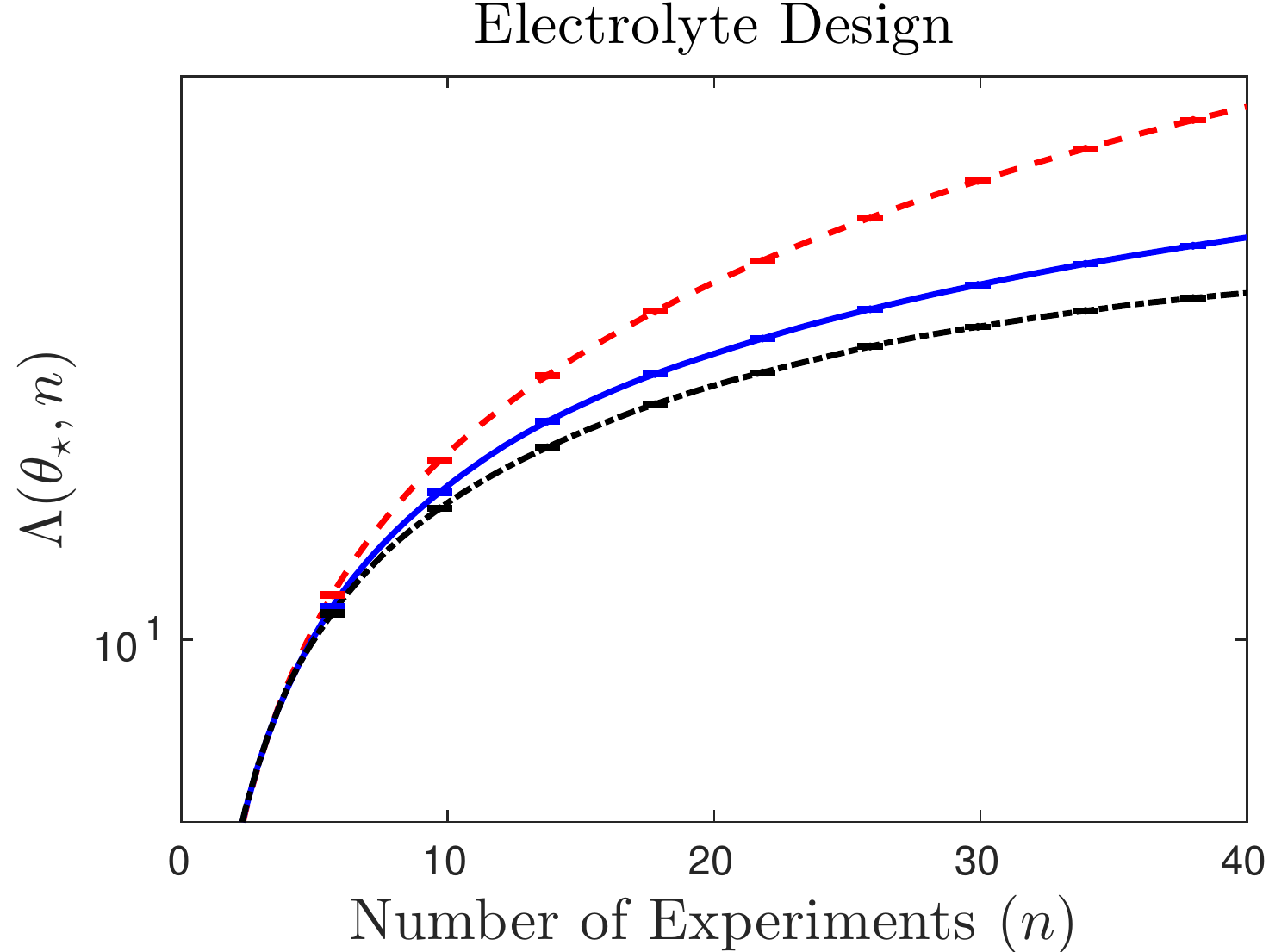}
\label{fig:expeg3}}\expfighsp
\caption{
\small 
Results on the experiments.
In all figures, the $x$ axis is the number of experiments $n$.
In the top four figures, the $y$ axis is the final penalty $\penl(\thetatrue, n)$
at the $n$\ssth iteration and in the bottom figures,
it is the cumulative penalty $\sumpenl(\thetatrue, n)$.
Lower is better in both cases.
The first two columns are the active learning problems (Sec.~\ref{sec:exal}),
the third is the posterior estimation problem (Sec.~\ref{sec:expost}),
and the fourth is the combined objective problem (Sec.~\ref{sec:excomb}).
All curves were averaged over at least 10 runs, and error bars indicate one
standard error.
\label{fig:exp}
\vspace{-0.10in}
}
\end{center}
\end{figure*}
}

\begin{abstract}
We design a new myopic strategy for a wide class of sequential design of
experiment (DOE) problems, where the goal is to collect data in order to to
fulfil a certain problem specific goal. Our approach, Myopic Posterior Sampling
(\mps), is inspired by the classical posterior (Thompson) sampling algorithm
for multi-armed bandits and leverages the flexibility of probabilistic
programming and approximate Bayesian inference to address a broad set of
problems. Empirically, this general-purpose strategy is competitive with more
specialised methods in a wide array of DOE tasks, and more importantly, enables
addressing complex DOE goals where no existing method seems applicable.  On the
theoretical side, we leverage ideas from adaptive submodularity and
reinforcement learning to derive conditions under which \mpss achieves
sublinear regret against natural benchmark policies.
\end{abstract}

\section{Introduction}
\label{sec:intro}





Many real world problems fall into the design of
experiments (DOE) framework, where one wishes to design a sequence of
experiments and collect data so as to achieve a desired goal.
For example, in electrolyte design for batteries, a chemist would like
to conduct experiments that measure battery conductivity
in order to identify an electrolyte that maximises the conductivity.
On a different day,
she would like to conduct experiments with different electrolyte designs to
learn how the viscosity of
the electrolyte changes with design.
These two tasks,
black-box optimisation and active learning, fall under the umbrella of
DOE and are pervasive in industrial and scientific applications.




While several methods exist for specific DOE tasks, real world
problems are broad and complex, and specialised approaches have
limited applicability. Continuing with the electrolyte design example,
the chemist can typically measure both conductivity and viscosity with
a single experiment~\citep{gering2006prediction}.
Since such experiments are expensive, it is 
wasteful to first perform  a set of experiments to optimise conductivity 
and then a fresh set to learn viscosity. 
It is preferable to design a single set of experiments
that simultaneously achieves both goals.
Another example is metallurgy, where one wishes to conduct experiments
to identify phase transitions in an alloy as the composition of metals
changes~\citep{bunn2016semi}.
Here and
elsewhere, both the model and the goal of the experimenter are very 
application specific and cannot be simply shoe-horned into
formulations like black-box optimisation or active learning.

To address these varied applications, we develop a
general and flexible framework for DOE, where a practitioner may
incorporate domain expertise about the system via a Bayesian model and
specify her desired goal via a penalty function $\penl$, which can
depend on unknown system characteristics and the data collected during the
DOE process. We then develop a myopic strategy for DOE, inspired by
posterior (Thompson) sampling for multi-armed
bandits~\cite{thompson33sampling}. 
Our approach has two key advantages. First, the Bayesian formulation
allows us to exploit advances in probabilistic
programming~\citep{carpenter2017stan,tran2017edward} to
incorporate domain expertise without introducing complexity. Since
experiments are typically extremely expensive in applications,
incorporating domain
expertise is essential to achieving the desired goal in few experiments.
Probabilisitic programming offers an elegant method to do so.
Second, our myopic/greedy strategy is simple and
computationally attractive in comparison with policies that engage in
long-term planning.  Nevertheless, borrowing ideas from submodular
optimisation and reinforcement learning,
we derive natural conditions under which our myopic
policy is competitive with the globally optimal one.
Our specific contributions are: 
\vspace{-0.03in}
\begin{enumerate}[leftmargin=0.2in]
\item We propose a flexible framework for DOE that allows a practitioner to
  describe their system (via a probabilistic model) and specify their
  goal (via a penalty function).
We also derive an algorithm, Myopic Posterior Sampling (\mps), for this setting.
\vspace{-0.03in}
\item 
We implement \mpss using probabilistic programming and
demonstrate that it performs favourably in a variety of synthetic and real
world DOE problems.
Despite our general formulation, \mpss is competitive with specialised methods designed
for particular problems.
\vspace{-0.03in}
\item In our theoretical analysis, we explore conditions under which
  \mps, which learns about the system over time, is competitive with
  myopic and globally optimal strategies that have full knowledge of
  the system.
\end{enumerate}

\textbf{Related work:} The classical results for (sequential) DOE
focus on discrete
settings~\cite{robbins1952some,chernoff1972sequential} or linear
models~\cite{fedorov1972theory}, which enable a more detailed
characterization and refined analysis than we provide. More recent
work in the bandit community studies more complex non-linear
models~\cite{bubeck2012regret,srinivas10gpbandits,streeter2009online},
but ignores temporal dependencies that arise in applications.
%
We focus on posterior sampling (PS)~\cite{thompson33sampling} as the
bandit algorithm, since it has proven to be quite general and admits a
clean Bayesian analysis~\cite{russo2016information}.  PS has been
studied in a number of bandit
settings~\cite{gopalan2014thompson,kawale2015efficient,kandasamy2017asynchronous},
and some episodic RL
problems~\cite{osband2013more,osband2014near,gopalan2015thompson},
where the agent is allowed to restart. In contrast, here we study PS
on a single long trajectory with no restarts.

Myopic/greedy policies are known to be near-optimal for sequential
decision making problems with \emph{adaptive
  submodularity}~\cite{golovin2011adaptive}, which generalizes
submodularity and formalizes a diminishing returns property.  Adaptive
submodularity has been used for several DOE setups including active
learning~\cite{golovin2010near,chen2013near,chen2017near} and
detection~\cite{chen2014active}, but these papers focus on
characterizing applications that admit near-optimal greedy strategies,
and do not address the question of \emph{learning} such a policy.
As such, these results are complementary to ours: adaptive
submodularity controls the approximation error (the difference between
myopic- and globally-optimal strategies, both of which know the
penalty $\penl$), while we control the estimation error (how close our
learned policy is to the myopic optimal policy that knows $\penl$).
As we show in Theorem~\ref{thm:submodthm}, with adaptive
submodularity, \mpss can also compete with the globally optimal
non-myopic policy.  Prior results for learning in (adaptive)
submodular environments are episodic and allow
restarts~\cite{gabillon2013adaptive,gabillon2014large}, which is
unnatural in the DOE setup.




Our formulation can also be cast as reinforcement learning since at
each round the agent makes a decision (what experiment to perform)
with the goal of minimizing a long-term cost (the penalty
function).
 One goal of our work is to understand when myopic
``bandit-like" strategies perform well in reinforcement learning
environments with long-term temporal dependencies. 
There are two main differences with prior
work~\citep{kearns2002near,strehl2009reinforcement,jaksch2010near,osband2013more,osband2014near}:
first, we make no explicit assumptions about the complexity of the
state and action space, instead placing assumptions on the penalty (reward)
structure and optimal policy, which is a better fit for our
applications. More importantly, in our setup, the true penalty is never
revealed to the agent, and instead it receives side-observations that
provide information about an underlying parameter governing the
environment. Lastly, our focus is on understanding when myopic
strategies have reasonable performance rather than on achieving global
optimality; it may be possible and interesting to extend these results
to the general RL setting.

\section{Set up and Method}
\label{sec:method}

Let $\Theta$ denote a parameter space, $\Xcal$ an action space, and
$\Ycal$ an outcome space.  We consider a Bayesian setting where a
\emph{true parameter} $\thetatrue \in \Theta$ is drawn from a prior
distribution $\pthetazero$. A decision maker repeatedly chooses an
action $X \in \Xcal$, conducts an experiment at $X$, and observes the
outcome $\YX \in\Ycal$. We assume $\YX$ is drawn from a
\emph{likelihood} $\PP(\cdot|X,\thetatrue)$, with known distributional
form. This process proceeds for $n$ rounds, resulting in a
\emph{data
sequence} $\dataseqn = \{(\Xj,\YXj)\}_{j=1}^n$, which is an ordered
multi-set of action-observation pairs. With $\Dcalseq$ denoting the
set of all possible data sequences, the goal is to minimise a
\emph{penalty function} $\penl: \Theta\times\Dcalseq\rightarrow[0,1]$.
In particular, we focus on the following two criteria, depending
on the application:
\begin{align*}
\cridx\;\;
\sumpenl(\thetatrue, \datan) = \sum_{t=1}^n\penl(\thetatrue, \datat)
\hspace{1.0in}
\fridx\;\;
\penl(\thetatrue, \datan),
\label{eqn:penldefn} \numberthis
\end{align*}
Here, $\datat=\{(\Xj,\YXj)\}_{j=1}^t$
denotes the prefix of length $t$ of the data sequence
$\datan$ collected by the decision maker.
The former notion is the
cumulative sum of all penalties, while the latter corresponds just to
the penalty once all experiments have been completed. Note that since
the penalty function depends on the unknown true parameter
$\thetatrue$, the decision maker cannot compute the penalty during the
data collection process, and instead must infer the penalty from
observations in order to minimise it. This is a key distinction from
existing work on reinforcement learning and sequential optimisation,
and one of the new challenges in our setting.

\insertprethmspacing
\begin{example}
  A motivating example is \emph{Bayesian active
    learning}~\cite{golovin2010near,chen2017near}. Here, actions $X$
  correspond to data points while $\YX$ is the label and
  $\PP(y|x,\theta)$ specifies an assumed discriminative model. We
  may set
  $\penl(\theta,\datan) = \|\tau(\theta)-\hat{\tau}(\datan)\|_2^2$
  where $\tau$ is a parameter of interest and $\hat{\tau}$ is a
  predetermined estimator (e.g. maximum likelihood or maximum a posteriori).
  The true penalty
  $\penl(\thetatrue,\datan)$ is
  not available to the decision maker since it requires knowing
  $\tau(\thetatrue)$.
\end{example}

\paragraph{Notation:} 
For each $t \in \NN$, let
$\Dcalseqt = \{\{(\Xj,\YXj)\}_{j=1}^t: \Xj\in\Xcal, \YXj\in\Ycal\}$
denote the set of all data sequences of length $t$, so that
$\Dcalseq = \bigcup_{t\in\NN}\Dcalseqt$.  We use $|D|$ to denote the
length of a data sequence and $D \concat D'$ for the concatenation of
two sequences.  $D \dssubset D'$ and $D' \dssupset D$ both equivalently denote
that $D$ is a prefix of $D'$. Given a data sequence $\dataseqt$, we
use $\dataseqtt{t'}$ for $t' < t$ to denote the prefix of the first
$t'$ action-observation pairs.

A \emph{policy} for experiment design chooses a sequence of actions
$\{\Xj\}_{j\in\NN}$ based on past actions and observations.
In particular, for a \emph{randomised} policy
$\policy = \{\policyj\}_{j\in\NN}$, at time $t$, an action is drawn
from $\policyt(\dataseqtmo) = \PP(\Xt\in\cdot|\dataseqtmo)$. 
Two policies that will appear frequently in the sequel are $\gopolicy$ and
$\uopolicy$, both of which operate with knowledge of
$\thetatrue$. $\gopolicy$ is the myopic optimal policy, which, from
every data sequence $\datat$ chooses the action $X$ minimizing the expected penalty
at the next step: $\EE[\penl(\thetatrue,\datat\concat\{(X,\YX)\})|\thetatrue, \datat]$.
On
the other hand $\uopolicy$ is the non-myopic, globally optimal adaptive policy,
which in state $\datat$ with $n-t$ steps to go chooses the action to
minimise the expected long-term penalty:
$\EE[\penl(\thetatrue,\datat\concat\{(X,\YX)\}\concat \datatt{t+2:n}) \mid
\uopolicy,\thetatrue, \datat]$.
Observe that $\uopolicy$ may depend on the time horizon $n$ while
$\gopolicy$ does not.

\subsection*{Design of Experiments via Posterior Sampling}

We present a simple and intuitive myopic strategy that aims to minimise $\penl$ based on
the posterior of the data collected so far.
For this, first define the expected look-ahead penalty 
$\penlplus:\Theta\times\Dcal\times\Xcal\rightarrow[0,1]$ to be the
expected penalty at the next time step if $\theta\in\Theta$ were the true parameter and
we were to take action $x\in\Xcal$.
Precisely, for a data sequence $D$,
\begin{align*}
\penlplus(\theta,D,x) \;=\; \EE_{\Yx\sim\PP(Y|x,\theta)}
\Big[\penlt\big(\theta,\,D\concat\{(x, \Yx)\}\,\big) \Big].
\numberthis
\label{eqn:lookaheadpenalty}
\end{align*}
The proposed policy, presented in Algorithm~\ref{alg:bdoe},
is called \bdoes (Myopic Posterior Sampling) and is denoted $\pspolicy$.
At time step $t$,
it first samples a parameter value $\theta$ from the posterior  for
$\thetatrue$ conditioned on the data, i.e. $\theta\sim\PP(\thetatrue|\datatmo)$.
Then, it chooses the action $\Xt$ that is expected to minimise the penalty $\penl$ by
pretending that $\theta$ was the true parameter.
It performs the experiment at $\Xt$, collects the observation $\YXt$, and proceeds to the 
next time step.

\insertAlgoMain

\textbf{Computational considerations:}
It is worth pointing out some of the computational considerations in
Algorithm~\ref{alg:bdoe}.
First, sampling from the posterior for $\thetatrue$ in step~\ref{step:sampletheta}
might be difficult, especially in complex Bayesian models.
Fortunately however, the field of Bayesian inference has made great strides in the recent
past seeing the development of fast techniques for approximate inference methods such as
MCMC or variational inference~\citep{neiswanger2015embarrassingly,hensman2012fast}.
Moreover, today we have efficient probabilistic programming
tools~\citep{carpenter2017stan,tran2017edward} that allow a practitioner to intuitively
incorporate domain expertise via a prior and obtain the posterior given data.
Secondly, the minimisation of the look ahead penalty in step~\ref{step:choosext}
can also be non-trivial, especially since it might involve empirically computing
the expectation in~\eqref{eqn:lookaheadpenalty}.
This is similar to existing work in Bayesian optimisation which assume access to such an
optimisation
oracle~\citep{srinivas10gpbandits,bull11boRates}.
That said, in many practical settings where experiments are  financially expensive and
can take several hours, these considerations are less critical.


Despite these concerns, it is worth mentioning that myopic strategies are
still computationally far more attractive than policies which try to behave globally
optimally.
For example, extending \mpss to a $k$ step look-ahead might involve an optimisation
over $\Xcal^k$ in step~\ref{step:choosext} of Algorithm~\ref{alg:bdoe} which might
be impractical for large values of $k$ except in the most trivial settings.

\textbf{Specification of the prior:}
In real world applications, the prior could be specified by a domain expert with knowledge
of the given DOE problem.
In some instances, the expert may only be able to specify the relations between the
various variables involved.
In such cases, one can specify the parametric form for the prior, and learn the
parameters of the prior in an adaptive data dependent fashion using maximum likelihood
and/or maximum a posteriori techniques~\citep{snoek12practicalBO}.
While we adopt both approaches in our experiments, we assume a fixed prior in our
theoretical analysis.

\section{Examples \& Experiments}
\label{sec:experiments}

In this section, we give some concrete examples of DOE problems that
can be specified by a penalty
function $\penl$ and present experimental results for these settings.
We compare $\pspolicy$ to random sampling (RAND), the myopically 
optimal policy $\gopolicy$ which assumes access to
$\thetatrue$,
and in some cases to specialised methods
developed for the particular problem.
In the interest of aligning our experiments with our theoretical
analysis, we compare methods on both criteria in~\eqref{eqn:penldefn},
although in these applications,
the final
penalty
$\penl(\thetatrue, \datan)$ is more important than the cumulative one $\sumpenl(\thetatrue, \datan)$.

\paragraph{High-level Takeaways:}
Despite being a quite general,
$\pspolicy$ outperforms, or performs as well as, specialised
methods.
$\pspolicy$ is competitive, but slightly worse than the non-realisable
$\gopolicy$. 
Finally
$\pspolicy$ enables effective DOE in complex settings where no prior
methods seem applicable.

\paragraph{Implementation details:}
One of the experiments in Section~\ref{sec:exal} admits analytical computation
of the posterior.
In all other experiments, we use the Edward probabilistic programming
framework~\citep{tran2017edward}.
We use variational inference to approximate the
posterior, and then draw a sample from this approximation.
The look-ahead
penalty~\eqref{eqn:lookaheadpenalty} is computed empirically by
drawing $50$ samples from $Y|X,\theta$ for the sampled $\theta$.  We
minimise $\penlplus$ by evaluating it on a fine grid and choosing the
maximum.  We use grid sizes $100$, $2500$, and $27000$ respectively
for one, two and three dimensional domains $\Xcal$.

\subsection{Active Learning}
\label{sec:exal}

\textbf{Problem:}
As described previously,
we wish to learn some parameter $\tautrue = \tau(\thetatrue)$ which
is a function of the true parameter $\thetatrue$.
Each time we query some $X\in\Xcal$, we see a noisy observation (label)
$Y\sim\PP(Y|X,\thetatrue)$.
We conduct two synthetic experiments in this setting.
We use $\|\tautrue - \tauhat(\datan)\|_2^2$ as the penalty where
$\tauhat$ is a regularised maximum likelihood estimator.
In addition to RAND and $\gopolicy$, we compare $\pspolicy$ to the ActiveSelect
method of~\citet{chaudhuri2015convergence}.

\textbf{Experiment 1:}
We use the following logistic regression model:
$\Yx|x,\theta \sim \Ncal(\ftheta(x), \eta^2)$ where
$\ftheta(x) =  \frac{a}{1 + e^{b(x-c)}}$.
The true parameter is $\thetatrue = (a, b, c, \eta^2)$ and our goal is to
estimate $\tautrue = (a, b, c)$.
The MLE is computed via gradient ascent on the log likelihood.
In our experiments, we used $a=2.1, b=7, c=6$ and $\eta^2 = 0.01$ as $\thetatrue$.
We used normal priors $\Ncal(2, 1)$, $\Ncal(5, 3)$ and $\Ncal(5, 3)$ for
$a, b, c$ respectively and an inverse gamma ${\rm IG}(20, 1)$ prior for
$\eta^2$.
As the action space, we used $\Xcal = [0, 10]$.
For variational inference, 
we used a normal approximation for the posterior for $a, b, c$ and
an inverse gamma approximation for $\eta^2$.
The results are given in the first column of Figure~\ref{fig:exp}.

\insertFigExpone

\textbf{Experiment 2:}
In the second example, we use the following linear regression model:
$\Yx|x,\theta \sim \Ncal(\ftheta(x), 0.01)$ where
$\ftheta(x) = \sum_{i=1}^{16} \theta_{*i} \phi(x-c_i)$.
Here, $\phi(v) = \frac{1}{\sqrt{0.2\pi}} e^{-5\|v\|_2^2}$ and
the points $c_1, \dots, c_{16}$ were arranged in a $4\times 4$ grid
within $[0, 1]^2$.
We set $\theta_{*i} = g(c_i)$, with $g(v) = \sin(3.9 \pi ((v_1-0.1)^2 + v_2 + 0.1))$.
Our goal is to estimate $\tautrue = \thetatrue$.
As the action space, we used $\Xcal = [0, 1]^2$.
The posterior for $\thetatrue$ was calculated in closed form
using a normal distribution $\Ncal(0, I_{16})$ as the prior.
The results are given in the second column of Figure~\ref{fig:exp}.

\subsection{Posterior Estimation \& Active Regression}
\label{sec:expost}

\textbf{Problem:}
Consider estimating a non-parametric function $\fthetatrue$, which is known to be uniformly
smooth.
An action $x\in\Xcal$ is a query to the function $f$, upon which we observe
$\Yx = \fthetatrue(x) + \epsilon$, where $\EE[\epsilon] = 0$.
If the goal is to learn $\fthetatrue$ uniformly well in $L^2$ error,
i.e. with penalty 
$\|\fthetatrue - \fhat(\datan)\|^2$,
adaptive techniques may not perform
significantly better than non-adaptive ones~\citep{willett2006faster}.
However, if our penalty was $\penl(\thetatrue, \datan) =
\|\sigma(\fthetatrue) - \sigma(\fhat(\datan))\|^2$
for some monotone super-linear transformation $\sigma$,
then adaptive techniques may do 
better by requesting more evaluations at regions with high $\fthetatrue$ value.
This is because, $\penl(\thetatrue, \datan)$ is more sensitive to such regions
due to the transformation $\sigma$.

A particularly pertinent instance of this formulation arises in
astrophysical applications where one wishes to estimate the posterior
distribution
of cosmological parameters, given some astronomical
data $Q$~\citep{parkinson06wmap3}.
Here, an astrophysicist specifies a prior $\Xi$ over
 the cosmological parameters $Z\in\Xcal$, and 
the likelihood of the data for a given choice 
of the cosmological parameters $x\in\Xcal$  is computed via 
an expensive astrophysical simulation. 
The prior and the likelihood gives rise to an unknown log joint density\footnote{
It is important not to conflate the astrophysical Bayesian model 
which specifies a prior over $\Xcal$ 
with our algorithm which assumes a prior over $\Theta$.
}  $\fthetatrue$ defined on $\Xcal$,
and the goal is to estimate the 
the joint density $p(Z=x, Q) = \exp(\fthetatrue(x))$ so that we can perform posterior inference.
Adopting assumptions from prior work~\citep{kandasamy2015bayesian} we
model $\fthetatrue$ as a Gaussian process, which is reasonable since
we expect a log density to be smoother than the density itself.
As we wish to estimate the joint density, $\penl$
takes the above form with $\sigma = \exp$.

\textbf{Experiment 3:}
We use data on Type I-a supernova from~\citet{davis07supernovae}.
We wish to estimate the posterior over the Hubble constant
$H \in (60, 80)$, the dark matter fraction $\Omega_M\in(0,1)$
and the dark energy fraction $\Omega_E\in(0,1)$, which constitute our
three dimensional action space $\Xcal$.
The likelihood is computed via the Robertson-Walker metric.
In addition to $\gopolicy$ and RAND, we compare $\pspolicy$ to Gaussian
process based exponentiated variance reduction
(GP-EVR)~\citep{kandasamy2015bayesian}
which was specifically designed for this setting.
We evaluate the penalty via numerical integration.
The results are presented in the third column of Figure~\ref{fig:exp}.

\subsection{Combined and Customised Objectives}
\label{sec:excomb}

\textbf{Problem:}
In many real world problems, one needs to design experiments with multiple goals.
For example, an experiment might evaluate multiple objectives, and the task might
be to optimise some of them, while learning the parameters for another.
Classical methods specifically designed for active learning or optimisation may not
be suitable in such settings.
One advantage to the proposed framework is that it allows us to combine multiple goals
in the form of a penalty function.
For instance, if an experiment measures two functions $\fthetatrueo, \fthetatruet$
and we wish to
learn $f_1$ while optimising $f_2$, we can define the penalty as
$\penl(\thetatrue, \datan) =  
\|\fthetatrueo - \fhato(\datan)\|^2 + (\max \fthetatruet -
\max_{\Xt, t\leq n}\fthetatruet(\Xt))$.
Here $\fhato$ is an estimate for $\fhato$ obtained from the data,
$\|\cdot\|$ is the $L_2$ norm and 
$\max_{\Xt, t\leq n}\fthetatruet(\Xt)$ is the maximum point of
$\fthetatruet$ we have evaluated so far.
Below, we demonstrate one such application.

\textbf{Experiment 4:}
In battery electrolyte design, one tests an electrolyte composition under various
physical conditions.
On an experiment at $x\in\Xcal$,
we obtain measurements $\Yx = (\Yxsolv, \Yxvisc, \Yxcond)$
which are noisy measurements
 of the solvation energy  $\fsolv$, 
the viscosity $\fvisc$ and the specific conductivity $\fcond$.
Our goal is to estimate $\fsolv$ and $\fvisc$ while optimising $\fcond$.
Hence,
\[
\penl(\thetatrue, \datan) =  
\alpha \|\fsolv - \fsolvhat(\datan)\|^2 + 
\beta \|\fvisc - \fvischat(\datan)\|^2 +
\gamma (\max \fcond -
\max_{\Xt, t\leq n}\fcond(\Xt)),
\]
where, the parameters $\alpha, \beta, \gamma$ were chosen so as to scale each objective
and ensure that none of them dominate the penalty.
In our experiment,
we use the dataset from~\citet{gering2006prediction}.
Our action space $\Xcal$ is parametrised by the following three variables:
$Q\in(0,1)$ measures the proportion of two solvents EC and EMC in the electrolye,
$S\in(0, 3.5)$ is the molarity of the salt $\text{LiPF}_6$
and $T\in(-20, 50)$ is the temperature in Celsius.
We use the following prior which is based off a physical understanding
of the interaction of these variables.
$\fcond:\Xcal\rightarrow\RR$ is sampled from a Gaussian process (GP),
$\fvisc(Q, S, T) = \exp(-a T)\gvisc(Q, S)$ where $\gvisc$ is sampled from a  GP,
and 
$\fsolv(Q, S, T) = b + \exp(cQ - dS - eT)$.
We use inverse gamma priors for $a, b, d, e$ and a normal prior for $c$.
For variational inference, we used inverse gamma approximations for $a,b,d,e$,
a normal approximation for $c$,
and GP approximations for $\fcond$ and $\gvisc$.
We use the posterior mean of $\fsolv$ and $\fvisc$ under this prior
as the estimates $\fsolvhat, \fvischat$.
We present the results in the fourth column of Figure~\ref{fig:exp} where 
we compare RAND, $\pspolicy$ and $\gopolicy$.
This is an example of a customised DOE problem for which no prior method seems
directly applicable. 

\subsection{Bandits \& Bayesian Optimisation}

Lastly, we mention that bandit optimisation is a self-evident special case of our
formulation.
Here, the parameter $\thetatrue$ specifies a function $\fthetatrue:\Xcal\rightarrow\RR$.
When we choose a point $X\in\Xcal$ to evaluate the function, we observe
$\YX = \fthetatrue(X) + \epsilon$ where $\EE[\epsilon] = 0$.
In the bandit framework, the penalty is the instantaneous regret
$\penl(\thetatrue, \datan) = \max_{x\in\Xcal}\fthetatrue(x) - \fthetatrue(\Xn)$.
In Bayesian optimisation, one is interested in simply finding a single
value close to the optimum and hence
$\penl(\thetatrue, \datan) = \max_{x\in\Xcal}\fthetatrue(x) - 
\max_{t\leq n}\fthetatrue(\Xt)$.
In either case, $\pspolicy$ reduces to the Thompson sampling procedure
as $\argmin_{x\in\Xcal} \penlplus(\thetatrue, \datatmo, x)$
$= \argmax_{x\in\Xcal} \ftheta(x)$, where $\ftheta$ is a random
function drawn from the posterior.  Since prior work has demonstrated
that Thompson sampling performs empirically well in several bandit
optimisation
settings~\citep{chapelle2011empirical,hernandez2017parallel,kandasamy2018parallel},
we omit experimental results for this example. 
One can also cast other
variants of Bayesian optimisation, including multi-objective
optimisation~\citep{hernandez2016predictive} and constrained
optimization~\citep{gardner2014bayesian}, in our general formulation.

\section{Theoretical Analysis}
\label{sec:regret}

In this section we derive theoretical guarantees for $\pspolicy$. Our
emphasis is on understanding conditions under which myopic learning
algorithms can perform competitively with the myopic optimal strategy
$\gopolicy$ and even the globally optimal strategy $\uopolicy$
(see Section~\ref{sec:method}).

Let the loss $\lossn(\thetatrue,\policy)$ of a policy $\policy$ after
$n$ evaluations be the expected sum of cumulative penalties for fixed
$\thetatrue$, i.e.
$\lossn(\thetatrue,\policy) =
\EE[\sumpenl(\thetatrue,\dataseqn)|\thetatrue,\policy]$
where $\dataseqn$ is the data collected by $\policy$
(Recall~\eqref{eqn:penldefn}).
For criterion $\cridx$, we are
interested in upper bounding $\lossn(\thetatrue,\policy)$ in terms of
$\lossn(\thetatrue,\gopolicy)$,
which yields a \emph{cumulative regret bound}, and for criterion
$\fridx$, we hope to bound
$\EE[\penl(\thetatrue,\dataseqn)\mid\thetatrue,\policy]$ in terms of
the analogous quantities for $\gopolicy,\uopolicy$, which serves as an
\emph{final regret bound}.
Note that a comparison with
$\uopolicy$ on $\cridx$ is meaningless since it might take high
penalty actions in the early stages in order to do well in the long
run.
Our bounds will hold in expectation
over $\thetatrue\sim\pthetazero$.



The following proposition shows that without further assumptions, a
non-trivial regret bound is impossible. Such results are common in the
RL literature, and motivate several structure assumptions, including
small diameter~\cite{jaksch2010near} and episodic
problems~\cite{dann2015sample,osband2016posterior}.

\insertprethmspacing
\begin{proposition}
  There exists a DOE problem where
  $\lim_{n\to\infty}
  \EEtt{\thetatrue\sim\pthetazero}[\lossn(\thetatrue, \policy) -
  \lossn(\thetatrue, \gopolicy)] = 1/2$.
\end{proposition}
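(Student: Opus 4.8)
The plan is to exhibit an explicit DOE instance in which $\thetatrue$ can never be identified from the observations, so that every policy is forced to guess on a decision whose consequence is then locked in permanently. Take $\Theta = \{0,1\}$ with $\pthetazero$ uniform, action space $\Xcal = \{0,1\}$, and a degenerate outcome space $\Ycal = \{0\}$ so that $\YX$ is constant and hence independent of $\thetatrue$. The essential feature is that observations carry no information, so the posterior $\PP(\thetatrue \mid \datat)$ equals the uniform prior at every round. I would then define the penalty to depend only on the very first action and on the length of the data sequence,
\[
\penl(\theta, \datat) \;=\; 2^{-t}\,\mathbbm{1}[X_1 \neq \theta], \qquad X_1 \text{ the first action in } \datat,\; t = |\datat|,
\]
which lies in $[0,1]$ and is a legitimate function of $\datat$.

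First I would evaluate the benchmark. Since $\gopolicy$ knows $\thetatrue$, at the first step it minimises $\penlplus(\thetatrue, \emptyset, X_1) = \tfrac12\,\mathbbm{1}[X_1 \neq \thetatrue]$ by selecting $X_1 = \thetatrue$; the penalty is then $0$ at every round thereafter, so $\lossn(\thetatrue, \gopolicy) = 0$.

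Next I would evaluate the learner $\policy$ (e.g.\ $\pspolicy$). The key observation is that at the first step the posterior is still the uniform prior, so the parameter sampled from it is independent of $\thetatrue$, and therefore $X_1 = \argmin_{x} \penlplus(\theta, \emptyset, x)$ is independent of $\thetatrue$ as well; hence $\PP(X_1 \neq \thetatrue) = \tfrac12$. This gives
\[
\EEtt{\thetatrue\sim\pthetazero}[\lossn(\thetatrue,\policy)] = \EE\Big[\textstyle\sum_{t=1}^n 2^{-t}\,\mathbbm{1}[X_1 \neq \thetatrue]\Big] = \tfrac12\,(1 - 2^{-n}) \xrightarrow{\,n\to\infty\,} \tfrac12 .
\]
Combining the two computations yields $\lim_{n\to\infty}\EEtt{\thetatrue\sim\pthetazero}[\lossn(\thetatrue,\policy) - \lossn(\thetatrue,\gopolicy)] = \tfrac12$, as claimed; the same conclusion in fact holds for \emph{any} policy, since with no information every choice of $X_1$ matches $\thetatrue$ with probability exactly $1/2$.

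The difficulty here is entirely in the design of the instance rather than in any calculation. Three properties must hold simultaneously: the observations must be uninformative, so the gap cannot be closed by learning (this is precisely the ``penalty depends on the unobservable $\thetatrue$'' obstruction emphasised in the setup); the costly mistake must be irreversible, so that the regret persists rather than being corrected later (enforced by tying $\penl$ to $X_1$ alone); and the per-round weights must be summable with total mass $1$, so that the cumulative regret converges to the stated constant rather than diverging or vanishing (enforced by the geometric factor $2^{-t}$). Getting all three to coexist, and in particular choosing weights that produce exactly $1/2$, is the only genuinely delicate step.
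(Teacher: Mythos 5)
Your construction is internally valid, and it does deliver the displayed equality: with degenerate observations the first action of any policy is independent of $\thetatrue$, so every policy pays $\tfrac12\sum_{t=1}^n 2^{-t}\to\tfrac12$ while $\gopolicy$ pays $0$. But this is a genuinely different construction from the paper's, and the difference is not cosmetic. The paper's example uses the \emph{persistent} penalty $\penl(\theta_i,D)=\one\{X_{1-i}\in D\}$ (once you ever play the wrong action, the penalty is $1$ at every subsequent round): there the unavoidable mistake at $t=1$ recurs forever, so every policy's cumulative regret is at least $n/2$, i.e.\ it grows \emph{linearly}, and the ``$1/2$'' in the paper's argument is the per-round regret. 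Read literally, the displayed limit is actually wrong for the paper's own example (it is $+\infty$); the statement is evidently intended with a $\tfrac1n$ normalisation, since its announced purpose is to show that no \emph{sublinear} regret bound is possible without structural assumptions. Your $2^{-t}$ damping is engineered precisely to make the total cost of the one mistake summable, which matches the display as written but proves only that regret cannot vanish --- a fact compatible with every positive result in the paper, and hence one that cannot play the proposition's motivating role.

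The deeper problem is that your instance sits \emph{inside} the class of problems that the paper's structural conditions carve out, whereas the paper's instance sits outside it. Because appending any action-observation pair multiplies your penalty by exactly $\tfrac12$ regardless of which action is appended, your penalty satisfies recoverability (Condition~\ref{cond:recoverability}) with $\alpha=\tfrac12$; the paper's penalty violates all three conditions (recoverability there would force $\alpha\geq1$, since no action can ever close the gap once the wrong action is in the history). So your example cannot demonstrate that those conditions are needed. Worse, since your observations carry no information about $\thetatrue$, the information gain of any data collected by a $\thetatrue$-oblivious policy is zero, so $\IGn=0$; combined with $\alpha=\tfrac12$ (hence $B=2$) and $|\Xcal|=2$, Theorem~\ref{thm:finiteactions} would assert that $\pspolicy$ has regret at most $B\sqrt{n|\Xcal|\IGn/2}=0$, while in your instance its regret is $\tfrac12(1-2^{-n})>0$. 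Your example is thus in direct tension with that theorem (the culprit is the Pinsker step in its proof, which treats $\qt$ as a function of the observation alone even though $\qt$ depends on the unobserved $\thetatrue$). In short: your calculation verifies the literal display, but the intended statement is a linear lower bound witnessed by an unrecoverable penalty, which is what the paper's construction provides and yours deliberately avoids.
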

\begin{proof}
  Consider a setting with uniform prior over two parameters
  $\theta_0,\theta_1$ with two actions $X_0,X_1$. Set
  $\lambda(\theta_0,D) = \lambda(\theta_1,\cdot) = \one\{X_{1} \in D\}$.
  If $\thetatrue=\theta_0$, then
  $\gopolicy$ will repeatedly choose $X_0$ and incur cumulative (and final)
  loss $0$ and similarly when $\thetatrue=\theta_1$.
  On the other
  hand, the first decision for the decision maker must be the same for
  both choices of $\thetatrue$ and hence the regret is $1/2$.
\end{proof}



Motivated by this lower bound, we will study a variety of conditions
on the penalty function, under which a policy can achieve sub-linear
regret. We consider three such structural conditions, and our
results apply to environments satisfying \emph{any one} of these
conditions.

\insertprethmspacing
\begin{condition}[Structural conditions]
  \label{cond:structural_cond}
  Consider the following three conditions:
  \begin{condenum}
  \item \label{cond:instantaneous} {\bf Episodic Penalties.} There
    exists $H \in \NN$ such that for all $t$ and all $\datat$, we have
    \begin{align*}
      \penl(\thetatrue,\datat) =
  \penl\left(\thetatrue,\{(\Xj,\YXj)\}_{j=t - \lfloor t/H\rfloor}^t\right)
    \end{align*}
    Thus, the penalty at time $t$ depends on at most the previous $H$
    action-observation pairs.
  \item \label{cond:recoverability} {\bf Recoverability.} There exists
    $\alpha<1$ such that for data sequences $D_1, D_2$ with
    $\penl(\thetatrue, D_1) \leq \penl(\thetatrue, D_2) + \epsilon$,
    we have
\begin{align*}
\min_{x\in\Xcal} \EEtt{\Yx}[\penl(\thetatrue, D_1\concat (x,\Yx))]
\leq \min_{x\in\Xcal} \EEtt{\Yx}[\penl(\thetatrue, D_2\concat (x,\Yx))] +
\alpha\epsilon.
\end{align*}
The expectation is over the observation $\Yx\sim \PP(y|x,\thetatrue)$.
\item \label{cond:moreisbetter} {\bf More data is better.}  Let
  $D_t,D_{t'}'\in\Dcal$ be data sequences of length $t,t'$ such that
  $D_t \dssubset D_{t'}'$. Then, for every $k \in \NN$, we have
  \begin{align*}
    \EE[\penl(\thetatrue, D_{t'}'\concat D_{t'+1:t'+k})\mid \gopolicy, D_{t'}'] \leq \EE[\penl(\thetatrue, D_{t}\concat D_{t+1:t+k})\mid \gopolicy, D_{t}]
  \end{align*}
  In both expectations, the last $k$ actions are chosen by $\gopolicy$. 
  \end{condenum}
\end{condition}
Condition~\ref{cond:instantaneous} reduces the problem to an episodic
one, since, when $t$ is a multiple of $H$, it is as if no data has been
collected, corresponding to a reset. As a special case, when $H=1$, we
are in the standard bandit
setting. Condition~\ref{cond:recoverability} states that it is
possible to choose an action from a ``bad" data sequence to improve,
by a multiplicative factor of $\alpha$, in comparison with choosing
the best action from a ``good" data sequence. 
This condition is closely related to diameter/reachability conditions
in infinite horizon
 RL~\cite{jaksch2010near}, which assume that every state (in
particular a good state) is reachable from every other in a small
number of steps. Finally, condition~\ref{cond:moreisbetter} states that
behaving like $\gopolicy$ for $k$ steps from some data sequence yields
lower penalty than when behaving like $\gopolicy$ for $k$ steps from a
prefix. Note that \ref{cond:recoverability}
and~\ref{cond:moreisbetter} involve $\thetatrue$, particular actions
and $\gopolicy$; they suggest that good actions exist, but these good
actions are not known to the decision maker when $\thetatrue$ is not known.

Before stating the main theorem, we introduce the maximum information
gain, $\IGn$, which captures the statistical difficulty of the learning problem.
\begin{align*}
\IGn &= \max_{\datan\subset\Dcaln} \MI(\thetatrue; \datan).
\numberthis\label{eqn:mig}
\end{align*}
Here $\MI(\cdot;\cdot)$ is the Shannon mutual information, and as such
$\IGn$ measures the maximum information a set of $n$
action-observation pairs can tell us about the true parameter
$\thetatrue$. The quantity appears as a statistical complexity measure
in many Bayesian adaptive data analysis
settings~\cite{srinivas10gpbandits,ma2015active,gotovos2013active}.
Below, we list some examples of common models which demonstrate that
$\IGn$ is typically sublinear in $n$.

\insertprethmspacing
\begin{example}We have the following bounds on $\IGn$ for
common models~\cite{srinivas10gpbandits}:
\vspace{-0.02in}
\begin{enumerate}[leftmargin=0.3in]
\setlength\itemsep{-0.02in}
\item \textbf{Finite sets:}
If $\Theta$ is a finite set, $\IGn \leq \log(|\Theta|)$ for all $n$.
\item \textbf{Linear models:}
Let $\Xcal\subset\RR^d$, $\theta\in\RR^d$,
 and $\Yx|x,\theta \sim\Ncal(\theta^\top x, \eta^2)$.
For a multi-variate Gaussian prior on $\thetatrue$, $\IGn\in\bigO(d\log(n))$.
\item \textbf{Gaussian process:}
For a Gaussian process prior with RBF kernel over $\Xcal\subset\RR^d$, and with Gaussian likelihood, we have $\IGn\in\bigO(\log(n)^{d+1})$.
\end{enumerate}
\end{example}

We now state our main theorem for finite action spaces $\Xcal$
under any one of the above conditions.

\insertprethmspacing
\begin{theorem}
\label{thm:finiteactions} 
Assume \textbf{any one} of conditions~\ref{cond:instantaneous}-\ref{cond:moreisbetter}
hold.
Let $B=H$ under condition~\ref{cond:instantaneous},
$B=1/(1-\alpha)$ under condition~\ref{cond:recoverability},
and $B=2$ under condition~\ref{cond:moreisbetter}.
Then if $\Xcal$ is finite,
\[
\EE[\lossn(\thetatrue,\pspolicy) - \lossn(\thetatrue, \gopolicy)]
\leq B\sqrt{\frac{n|\Xcal|\IGn}{2}}.
\]
\end{theorem}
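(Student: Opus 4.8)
The plan is to split the regret into an \emph{estimation} term, controlled by a posterior-sampling plus information-theoretic argument of the kind used in bandit analyses, and a \emph{propagation} factor $B$ that accounts for how a single myopic mistake reverberates through the temporally-dependent penalties; the three structural conditions will enter only through $B$. Throughout, all expectations are taken over $\thetatrue\sim\pthetazero$ so that the posterior-sampling identity below is exact.

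For the estimation term I would define the one-step look-ahead regret along the $\pspolicy$ trajectory,
\[
r_t = \penlplus(\thetatrue,\datatmo,\Xt) - \min_{x\in\Xcal}\penlplus(\thetatrue,\datatmo,x).
\]
The crucial observation is the posterior-sampling identity: conditioned on $\datatmo$, the parameter $\theta$ drawn in step~\ref{step:sampletheta} of Algorithm~\ref{alg:bdoe} is distributed identically to $\thetatrue$, and since $\Xt$ is the myopic minimiser for $\theta$ while the benchmark action is the myopic minimiser for $\thetatrue$, the pairs $(\theta,\Xt)$ and $(\thetatrue,\argmin_x\penlplus(\thetatrue,\datatmo,x))$ share the same conditional law. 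Hence $\EE[\min_x\penlplus(\thetatrue,\datatmo,x)\mid\datatmo] = \EE[\penlplus(\theta,\datatmo,\Xt)\mid\datatmo]$, so that
\[
\EE[r_t\mid\datatmo] = \EE\big[\penlplus(\thetatrue,\datatmo,\Xt) - \penlplus(\theta,\datatmo,\Xt)\mid\datatmo\big].
\]
This is exactly the gap between the look-ahead penalty under the true and the sampled parameter for a \emph{common} action, which is small precisely when the posterior has concentrated.

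I would then bound this gap by the per-step information gain $g_t = \MI(\thetatrue;(\Xt,\YXt)\mid\datatmo)$ through an information-ratio argument in the style of Russo--Van Roy: expanding $\EE[r_t\mid\datatmo]$ as a sum over the $|\Xcal|$ actions, invoking the identity above to surface the mutual information, and applying Cauchy--Schwarz across actions together with Pinsker's inequality (using $\penl,\penlplus\in[0,1]$) to obtain $\EE[r_t\mid\datatmo]^2 \leq \tfrac{|\Xcal|}{2}\,g_t$. Since the optimal action is a function of $\thetatrue$, the data-processing inequality lets me measure information about $\thetatrue$ rather than about the action, so the chain rule gives $\sum_{t=1}^n\EE[g_t] = \MI(\thetatrue;\datan)\leq\IGn$. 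Jensen's inequality followed by Cauchy--Schwarz over $t$ then yields $\sum_{t=1}^n\EE[r_t]\leq\sqrt{n|\Xcal|\IGn/2}$.

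It remains to establish the propagation bound $\EE[\lossn(\thetatrue,\pspolicy)-\lossn(\thetatrue,\gopolicy)]\leq B\sum_t\EE[r_t]$, and this is where I expect the real work to lie and where the three conditions are used. The natural device is a telescoping over hybrid policies $\pi^{(j)}$ that follow $\pspolicy$ for $j$ steps and then $\gopolicy$, so that $\pi^{(0)}=\gopolicy$, $\pi^{(n)}=\pspolicy$, and consecutive differences isolate the immediate cost $r_j$ of one $\pspolicy$ action plus its downstream effect under subsequent $\gopolicy$ play. Each condition bounds this downstream effect: under the episodic condition, one action influences at most the $H$ penalties of its episode and penalties reset across episodes, giving $B=H$; under recoverability, a deviation of size $r_j$ shrinks by a factor $\alpha$ per subsequent optimal step, so the geometric series sums to $B=1/(1-\alpha)$; and under ``more data is better'', the monotonicity of $\gopolicy$-continuations along the prefix order $\dssubset$ lets me charge each deviation to a bounded number of future penalties for $B=2$. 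The main obstacle is making this telescoping rigorous, since the $\pspolicy$ and $\gopolicy$ trajectories visit different data sequences and the penalties are non-Markovian; the crux is verifying that each condition indeed controls the cumulative downstream penalty created by a single deviation, independently of the future actions $\pspolicy$ happens to take.
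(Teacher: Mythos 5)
Your overall architecture---hybrid-policy telescoping plus a Thompson-sampling information-ratio argument---is exactly the paper's (its Lemma~\ref{lem:valuedecomp}, which formalises your hybrid policies $\pi^{(j)}$, and its Lemma~\ref{lem:russolemmas}), and your estimation bound $\sum_t\EE[r_t]\leq\sqrt{n|\Xcal|\IGn/2}$ is sound. The gap is in the step where the two pieces meet: the propagation inequality $\EE[\lossn(\thetatrue,\pspolicy)-\lossn(\thetatrue,\gopolicy)]\leq B\sum_{t}\EE[r_t]$. Conditions~\ref{cond:instantaneous} and~\ref{cond:moreisbetter} do not deliver this: they bound the downstream effect of exchanging one action-observation pair \emph{uniformly} by $B$, not \emph{proportionally} to the immediate myopic gap $r_t$. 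A concrete failure under Condition~\ref{cond:instantaneous}: take $H=2$, $\Theta=\{\theta_0,\theta_1\}$ with uniform prior, $\Xcal=\{X_0,X_1\}$, penalty identically $0$ at odd steps, and at even steps $\penl(\theta_i,\datat)=\one\{\text{action at time } t-1 \text{ was not } X_i\}$. The penalty incurred at any time $t$ never depends on the action taken at time $t$, so every action is a minimiser of the look-ahead penalty and $r_t\equiv 0$ for every policy; yet $\gopolicy$, resolving its myopic ties as a function of $\thetatrue$, incurs zero total penalty, while $\pspolicy$ (applying the same selection rule to its sampled $\theta$) errs with probability $1/2$ in the first episode. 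Its regret is therefore strictly positive while $B\sum_t\EE[r_t]=0$, so the claimed inequality fails; the same phenomenon (tiny immediate gaps, order-one within-episode consequences) survives perturbing the ties away. Under Condition~\ref{cond:moreisbetter} the constant $2$ likewise comes from two range-one terms (the paper compares against an inserted ``empty action''), not from anything proportional to $r_t$. Only under recoverability (Condition~\ref{cond:recoverability}) does your geometric-decay reading genuinely work, since there the quantity that contracts at rate $\alpha$ can be taken to be the realised one-step gap whose expectation is $r_t$.

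The paper avoids this trap by never reducing to one-step regrets. It keeps the full cost-to-go $\qt(x,y)$ (the expected cumulative penalty of playing $(x,y)$ at time $t$ and following $\gopolicy$ thereafter), uses the three structural conditions \emph{only} to prove the diameter bound $\sup\qt-\inf\qt\leq B$ (its Lemma~\ref{lem:Bbound}), and then runs the information-ratio argument directly on $\qt$, so that $B$ enters as the range constant in Pinsker's inequality: $\EEtmo[\qt(\Xt,\YXt)-\qt(\Xpt,\YXpt)]\leq B\sqrt{\tfrac{1}{2}|\Xcal|\,\MItmo(\thetatrue;(\Xt,\YXt))}$, after which Cauchy--Schwarz over $t$ and the chain rule of mutual information finish the proof. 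In short, $B$ must sit \emph{inside} the per-step information bound as the range of the cost-to-go, not outside a sum of one-step regrets; the ``crux'' you correctly flag at the end is resolved by the uniform bound of Lemma~\ref{lem:Bbound}, not by a bound proportional to $r_j$, and that is precisely why your decomposition cannot be repaired for two of the three conditions.
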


Theorem~\ref{thm:finiteactions} establishes a sublinear regret bound
for $\pspolicy$ against $\gopolicy$.  The $|\Xcal|$ term captures the
complexity of our action space and $\IGn$ captures the complexity of
the prior on $\thetatrue$.  The $\sqrt{n}$ dependence is in agreement
with prior results for Thompson
sampling~\citep{kaufmann2012thompson,osband2016posterior,russo2016thompson}. Thus,
under any of the above condition, $\pspolicy$ is competitive with the
myopic optimal policy $\gopolicy$, with average regret tending to $0$.

To compare with the globally optimal policy $\uopolicy$, we introduce
the notions of \emph{monotonicity} and \emph{adaptive
  submodularity}~\cite{golovin2011adaptive}. 
\insertprethmspacing
\begin{condition}(Monotonicity and Adaptive Submodularity)
\label{cond:submod}
Assume that $\penl$ is monotone, meaning that
for 
$D\in\Dcal$, $x\in\Xcal$,
we have $\EE[\penl(\thetatrue, D\concat\{(x,\Yx)\})] \leq \penl(\thetatrue,D)$.
Assume further that $\penl$ is adaptive submodular, meaning that for all $D \dssubset D'$, $x\in\Xcal$, we have
\begin{align*}
\EE[\penl(\thetatrue, D\concat\{(x,\Yx)\})] - \penl(\thetatrue,D)
\geq 
\EE[\penl(\thetatrue, D'\concat\{(x,\Yx)\})] - \penl(\thetatrue,D').
\end{align*}
\end{condition}
In words, monotonicity states that adding more data reduces the
penalty in expectation, while adaptive submodularity formalises a
notion of diminishing returns. 
That is, performing the same action is more beneficial when we have
less data.
It is easy to see that some assumption is needed here, since even in
simple episodic problems $\gopolicy$ can be arbitrarily worse than
$\uopolicy$.  Under Condition~\ref{cond:submod} it is known that that
$\gopolicy$ closely approximates $\uopolicy$, and using this fact, 
we have the following result for $\pspolicy$:

\insertprethmspacing
\begin{theorem}
\label{thm:submodthm}
Assume that $\penl$ satisfies condition~\ref{cond:submod} and one of
conditions~\ref{cond:instantaneous}-\ref{cond:moreisbetter}.
Let $\mu = 1 - \penl$ and define $B$ as in Theorem~\ref{thm:finiteactions}.
Then, for all $\gamma < 1$, we have 
\begin{align*}
\EE[\reward(\thetatrue, \datan) | \datan\sim\pspolicy]
&\;\geq\; (1 - \gamma) \EE[\reward(\thetatrue, \godatatt{\gamma{}n}) | \godatatt{\gamma{}n}\sim\uopolicy] -B \sqrt{\frac{|\Xcal|\IGn}{2n}}.
\end{align*}
\end{theorem}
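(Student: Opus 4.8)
The plan is to split the final regret into an \emph{estimation} term (the gap from $\pspolicy$ to the myopic optimal $\gopolicy$), which is controlled by Theorem~\ref{thm:finiteactions}, and an \emph{approximation} term (the gap from $\gopolicy$ to the globally optimal $\uopolicy$), which is controlled by adaptive submodularity; monotonicity is the bridge that lets me pass between cumulative and final quantities. Throughout write $\reward = 1-\penl$, $R_t^{\pspolicy} = \EE[\reward(\thetatrue,\datat)\mid\datat\sim\pspolicy]$ and $R_t^{\gopolicy} = \EE[\reward(\thetatrue,\datat)\mid\datat\sim\gopolicy]$, all expectations also taken over $\thetatrue\sim\pthetazero$. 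First I would simply restate Theorem~\ref{thm:finiteactions} in reward form: since $\sum_{t=1}^n\reward(\thetatrue,\datat) = n - \sumpenl(\thetatrue,\datan)$, the cumulative-penalty bound is equivalent to $\sum_{t=1}^n R_t^{\pspolicy} \ge \sum_{t=1}^n R_t^{\gopolicy} - B\sqrt{n|\Xcal|\IGn/2}$.

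Next I would convert this cumulative statement about $\pspolicy$ into one about its \emph{final} reward. The monotonicity half of Condition~\ref{cond:submod} holds for every $x\in\Xcal$, so applying it to the (random) action chosen by $\pspolicy$ at round $t$ and taking expectations shows that $R_t^{\pspolicy}$ is nondecreasing in $t$; hence the final value dominates the running average, $R_n^{\pspolicy} \ge \frac1n\sum_{t=1}^n R_t^{\pspolicy}$. Combining with the previous step gives $R_n^{\pspolicy} \ge \frac1n\sum_{t=1}^n R_t^{\gopolicy} - B\sqrt{|\Xcal|\IGn/(2n)}$, which already isolates exactly the error term appearing in the theorem.

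It then remains to lower bound $\frac1n\sum_t R_t^{\gopolicy}$ by $(1-\gamma)V^\star$, where $V^\star = \EE[\reward(\thetatrue,\godatatt{\gamma{}n})\mid\godatatt{\gamma{}n}\sim\uopolicy]$. Here I would use that $\gopolicy$, which maximizes the expected one-step reward, is precisely the greedy policy for the adaptive-monotone, adaptive-submodular objective $\reward$, so the bi-criteria guarantee of~\cite{golovin2011adaptive} applies with greedy horizon $t$ and optimal horizon $\gamma{}n$, yielding $R_t^{\gopolicy} \ge (1-e^{-t/(\gamma{}n)})V^\star$ for each $t$. Averaging over $t$ and using the sum-to-integral estimate $\frac1n\sum_{t=1}^n(1-e^{-t/(\gamma{}n)}) \ge 1 - \gamma(1-e^{-1/\gamma}) \ge 1-\gamma$ gives $\frac1n\sum_t R_t^{\gopolicy} \ge (1-\gamma)V^\star$, and substituting into the previous display completes the proof.

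The hard part will be the approximation step: one must verify that Condition~\ref{cond:submod}, stated pointwise in $\thetatrue$, supplies exactly the adaptive-monotonicity and adaptive-submodularity hypotheses needed to invoke the Golovin--Krause theorem in its bi-criteria form against a \emph{shorter-horizon} optimal policy, and that $\gopolicy$ genuinely coincides with the greedy policy for this objective. The $(1-\gamma)$ constant, rather than the usual $1-1/e$, is what forces the bi-criteria version with mismatched horizons $t$ and $\gamma{}n$ followed by averaging over $t$; by contrast the sum-to-integral estimate and the cumulative-to-final passage through monotonicity are routine once this is set up.
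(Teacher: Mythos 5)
Your proposal is correct, and its skeleton coincides with the paper's own proof: Theorem~\ref{thm:finiteactions} supplies the estimation term; the monotonicity half of Condition~\ref{cond:submod} makes the expected penalty along $\pspolicy$'s trajectory nonincreasing, so the final reward dominates the running average; a bi-criteria greedy-versus-optimal bound of the form $\EE[\reward(\thetatrue,\godatat)] \geq \bigl(1-e^{-t/m}\bigr)\EE[\reward(\thetatrue,\uodatam)]$ with $m=\gamma n$ handles the approximation term; and a sum-to-integral estimate turns the average of $1-e^{-t/m}$ into $1-\gamma$ (your integral bound and the paper's $(1+m)e^{-1/m}/n\leq\gamma$ computation are interchangeable). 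The one genuine difference is how the bi-criteria bound is discharged: you import it from \citet{golovin2011adaptive}, whereas the paper proves it from scratch as Lemma~\ref{lem:gouobound}, using monotonicity, a telescoping sum, the diminishing-returns inequality, and the defining greedy property of $\gopolicy$ to get $\delta_{t+1}\leq(1-1/m)\delta_t$ for $\delta_t=\penlbar(\uodatam)-\penlbar(\godatat)$, hence the exponential decay. This difference matters for exactly the reason you flag as ``the hard part'': the DOE setting---ordered data sequences, actions that may be repeated with fresh stochastic observations, and a submodularity condition stated per $\thetatrue$ with expectation only over $\Yx$---does not literally instantiate the Golovin--Krause item/realization framework, where querying an item twice reveals the same outcome and adaptive submodularity is defined through partial realizations, so a bare citation leaves a verification gap. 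The paper closes that gap with its half-page self-contained argument; your write-up would be complete once you either do the same or carry out the embedding into their framework explicitly. What your route buys is modularity (any improvement to adaptive-submodular approximation guarantees would transfer immediately); what the paper's buys is rigor without any framework translation.
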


The theorem is stated in terms of the final ``reward"
$\reward = 1 - \penl$, which is more natural for submodular
optimisation.
In terms of this reward, the theorem states that
$\pspolicy$ in $n$ steps is guaranteed to perform up to a $1-\gamma$
factor as well as $\uopolicy$ executed for $\gamma n < n$ steps, up to
an additive $\sqrt{\IGn/n}$ term. The result captures both
approximation and estimation errors, in the sense that we are using a
myopic policy to approximate a globally optimal one, and we are
learning a good myopic policy from data. In comparison, prior works on
adaptive submodular optimisation focus on approximation errors and
typically achieve $1-1/e$ approximation ratios against the $n$ steps
of $\uopolicy$. Our bound is quantitatively worse, but focusing on a
much more difficult task, and we view the results as complementary.
We finally note that an analogous bound holds against $\gopolicy$,
since it is necessarily worse that $\uopolicy$.

\akshay{I suggest we state one of these formally. Probably the
  infinite action case is most relevant.} 
Finally, we mention that
the above results can be generalised to very large or infinite action
spaces under additional structure on the
problem using known techniques~\citep{russo2016thompson,bubeck2011x};
this is tangential to the goal of this paper.
Algorithm~\ref{alg:bdoe} can be applied as is in either synchronously
or asychronously parallel settings with $m$ workers. By following the
analysis for parallel Thompson sampling for Bayesian
optimisation~\citep{kandasamy2018parallel}, one can
obtain results similar to Theorems~\ref{thm:finiteactions}
and~\ref{thm:submodthm} with mild dependence on $m$.

\section{Conclusion}
\label{sec:conclusion}
This paper studies myopic algorithms for sequential design of
experiments in a Bayesian setting. Our formulation is quite general,
allowing practitioners to incorporate domain knowledge via a
probabilistic model, and specify design goals via a penalty
function that may depend on system characteristics.
We also exploit advances in probabilistic programming for
further generality and ease of use.
Our
empirical results demonstrate that our general formulation has broad
applicability. 
Our algorithm performs favourably in comparison with
more specialised methods, and more importantly, enables complex DOE tasks where
existing methods are not applicable.
Our theoretical results establish
conditions under which a myopic algorithm based on posterior sampling
is competitive with myopic and globally optimal policies, both of
which know the underlying system parameters.
A natural theoretical question for future work is to study policies
with $k$-step lookahead, interpolating between myopic policies and
fully optimal ones.



\subsection*{Acknowledgements}
\vspace{-0.1in}
This research is partly funded by DOE grant DESC0011114, NSF grant
IIS1563887, the Darpa D3M program, AFRL, and
Toyota Research Institute, Accelerated Materials Design \& Discovery (AMDD)
program.
KK is supported by a Facebook fellowship and a Siebel scholarship.
\vspace{-0.1in}

{\small
\renewcommand{\bibsection}{\section*{References\vspace{-0.1em}} }
\setlength{\bibsep}{1.1pt}
\bibliography{kky,bib_cox}
}
\bibliographystyle{plainnat}


\appendix

\section{Some Ancillary Material}
\label{sec:ancillary}
\label{app:ancillary}

We will need the following technical results for our analysis.
The first is a version of Pinsker's inequality.

\insertprethmspacing
\insertprethmspacing
\begin{lemma}[Pinsker's inequality]
\label{lem:pinsker}
Let $X,Z\in\Xcal$ be random quantities and $f:\Xcal\rightarrow[0,B]$.
Then, \emph{$\big|\EE[f(X)] - \EE[f(Z)]\big| \leq B \sqrt{\frac{1}{2}\KL(P(X)\|P(Z))}$}.
\end{lemma}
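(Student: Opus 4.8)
The plan is to split the argument into two standard steps: first bound the gap in expectations by the total variation distance between the laws of $X$ and $Z$, and then invoke the classical form of Pinsker's inequality to convert total variation into $\KL$ divergence.

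For the first step, let $P$ and $Q$ denote the distributions $P(X)$ and $P(Z)$, with densities $p,q$ with respect to a common dominating measure. The key observation is that the expectation gap is unchanged under recentering $f$: for any constant $c$, $\EE[f(X)] - \EE[f(Z)] = \EE[(f-c)(X)] - \EE[(f-c)(Z)]$, since the constant contributes equally to both terms. Choosing $c = B/2$ makes $g := f - B/2$ satisfy $|g| \le B/2$ pointwise, because $f$ takes values in $[0,B]$. Then
\[
\big|\EE[f(X)] - \EE[f(Z)]\big| = \Big|\int g\,(p - q)\Big| \le \frac{B}{2}\int |p - q| = B\, d_{\mathrm{TV}}(P,Q),
\]
where $d_{\mathrm{TV}}(P,Q) = \tfrac12 \int |p-q|$ is the total variation distance. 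This is precisely the point where boundedness of $f$ in $[0,B]$ is used, and it is what produces the factor $B$.

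For the second step, I would apply the classical Pinsker inequality $d_{\mathrm{TV}}(P,Q) \le \sqrt{\tfrac12 \KL(P\|Q)}$; combining it with the display above yields the claim immediately. If a self-contained proof of the classical inequality is wanted, I would reduce to the two-point case: partition $\Xcal$ into $A = \{p \ge q\}$ and its complement, so that $d_{\mathrm{TV}}(P,Q) = P(A) - Q(A)$, and apply the data-processing inequality for $\KL$ under the binary indicator map to obtain $\KL(P\|Q) \ge \mathrm{kl}\big(P(A)\,\|\,Q(A)\big)$, where $\mathrm{kl}$ is the binary relative entropy. It then remains to establish the scalar inequality $\mathrm{kl}(a\|b) \ge 2(a-b)^2$ for $a,b\in[0,1]$; fixing $b$ and setting $h(a) = \mathrm{kl}(a\|b) - 2(a-b)^2$, one checks $h(b) = h'(b) = 0$ and $h''(a) = \tfrac{1}{a(1-a)} - 4 \ge 0$ (since $a(1-a)\le \tfrac14$), so $h$ is convex with minimum value $0$.

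The only genuine obstacle is the classical Pinsker inequality itself; the reduction of the lemma to it via recentering is routine. I expect the cleanest write-up to simply cite the classical inequality as standard, treating the present lemma as the elementary observation that a function bounded in $[0,B]$ converts the total variation distance into the stated $B\sqrt{\tfrac12\KL}$ bound.
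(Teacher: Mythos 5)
Your proposal is correct: the recentering step $g = f - B/2$ gives the tight factor $B$ (rather than $2B$), and combining with classical Pinsker plus the binary-reduction argument for $\mathrm{kl}(a\|b) \ge 2(a-b)^2$ is sound throughout. The paper itself states this lemma without proof, treating it as a standard fact—exactly as you suggest in your closing remark—so your write-up simply supplies the standard derivation the paper omits, and there is nothing substantive to compare against.
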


The next, taken from~\citet{russo2016thompson},
 relates the KL divergence to the mutual information for two random quantities
$X, Y$.

\insertprethmspacing
\insertprethmspacing
\begin{lemma}[\citet{russo2016thompson}, Fact 6]
\label{lem:klmi}
For random quantities $X,Z\in\Xcal$, \\
\emph{
$I(X;Z) = \EE_X[\KL(P(Y|X)\|P(Y))]$.
}
\end{lemma}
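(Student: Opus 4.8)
The plan is to prove this purely by unfolding the definition of Shannon mutual information and applying the chain rule for probabilities. Throughout I write $Y$ for the second random quantity (the statement's $Z$), so that the target identity reads $I(X;Y) = \EE_X[\KL(P(Y|X)\|P(Y))]$; this is the standard decomposition of mutual information as an expected conditional KL divergence, and nothing beyond elementary manipulation is required.

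First I would start from the characterisation of mutual information as the KL divergence between the joint law and the product of the marginals,
\begin{align*}
I(X;Y) = \KL\big(P(X,Y)\,\big\|\,P(X)P(Y)\big) = \EE_{(X,Y)}\left[\log\frac{P(X,Y)}{P(X)P(Y)}\right].
\end{align*}
Next I would factor the joint density using the chain rule, $P(X,Y) = P(X)P(Y|X)$, so that the $P(X)$ term cancels and the ratio inside the logarithm simplifies to $P(Y|X)/P(Y)$, giving
\begin{align*}
I(X;Y) = \EE_{(X,Y)}\left[\log\frac{P(Y|X)}{P(Y)}\right].
\end{align*}
The final step is to split the expectation via the tower property: conditioning on $X$ first, the inner expectation over $Y\mid X$ of $\log(P(Y|X)/P(Y))$ is by definition $\KL(P(Y|X)\|P(Y))$, and averaging this over $X$ yields $\EE_X[\KL(P(Y|X)\|P(Y))]$, as claimed.

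There is essentially no hard step here: the result is a textbook identity, and the only obstacle is bookkeeping — ensuring all densities are taken with respect to a common dominating measure so that the chain-rule factorisation and the logarithm manipulations are valid, and resolving the mild notational inconsistency in the statement (the $Z$ in the hypothesis versus the $Y$ appearing in the displayed formula). Since the lemma is quoted verbatim from \citet{russo2016thompson}, I would simply cite their Fact 6 for the formal (measure-theoretic) justification and present the three-line derivation above as the proof.
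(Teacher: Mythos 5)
Your derivation is correct, but there is nothing in the paper to compare it against: the paper does not prove Lemma~\ref{lem:klmi} at all, stating it as an imported fact (Fact~6 of \citet{russo2016thompson}) and using it as a black box in the proof of Lemma~\ref{lem:russolemmas}. Your three-line argument---writing the mutual information as $\KL\big(P(X,Y)\,\|\,P(X)P(Y)\big)$, factoring the joint via the chain rule so the ratio collapses to $P(Y|X)/P(Y)$, and then applying the tower property so the inner expectation becomes the conditional KL divergence---is the standard textbook proof that backs this citation, and it is exactly the form in which the lemma is later invoked (with $X$ playing the role of $\Xpt$ and the second variable playing the role of $\Yxone$, where the outer expectation is over the conditioning variable). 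You also correctly identified and repaired the typo in the statement: the $Y$ appearing in the displayed formula should be the $Z$ of the hypothesis, i.e.\ the identity is $I(X;Z) = \EE_X[\KL(P(Z|X)\|P(Z))]$, which is what your renaming produces. Your caveat about a common dominating measure is the right one to flag; with the usual convention that both sides may equal $+\infty$, the identity holds in full generality, so deferring the measure-theoretic bookkeeping to the cited reference is entirely reasonable.
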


The next result is a property of the Shannon mutual information.

\insertprethmspacing
\begin{lemma}
\label{lem:misuperset}
Let $X, Y, Z$ be random quantities such that $Y$ is a deterministic function of $X$.
Then, $I(Y;Z) \leq I(X;Z)$.
\end{lemma}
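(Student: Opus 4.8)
The plan is to recognise this statement as the data-processing inequality, specialised to the case of a deterministic ``channel''. Writing $Y = g(X)$ for the deterministic map in question, the key structural observation is that $(Z,X,Y)$ forms a Markov chain $Z \to X \to Y$: once $X$ is known, $Y$ is completely determined, so $Y$ is conditionally independent of $Z$ given $X$. Granting this, the whole proof reduces to two applications of the chain rule for mutual information together with the non-negativity of conditional mutual information.

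First I would consider the joint mutual information $I((X,Y);Z)$ between the pair $(X,Y)$ and $Z$, and expand it in the two possible orders using the chain rule:
\begin{align*}
I((X,Y);Z) &= I(X;Z) + I(Y;Z\mid X), \\
I((X,Y);Z) &= I(Y;Z) + I(X;Z\mid Y).
\end{align*}
Equating the right-hand sides gives $I(X;Z) + I(Y;Z\mid X) = I(Y;Z) + I(X;Z\mid Y)$. Since conditional mutual information is always non-negative we have $I(X;Z\mid Y) \ge 0$, so it suffices to argue that $I(Y;Z\mid X) = 0$ in order to conclude $I(X;Z) \ge I(Y;Z)$, which is the claim.

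The one step that requires care — and the main (mild) obstacle — is establishing $I(Y;Z\mid X) = 0$. This holds precisely because $Y$ is a deterministic function of $X$: conditioned on any value of $X$, the variable $Y = g(X)$ is constant, so the conditional law of $Y$ given $(X,Z)$ coincides with its conditional law given $X$ alone (both are point masses at $g(X)$), its conditional entropy given $X$ vanishes, and $Y$ is conditionally independent of $Z$ given $X$. Substituting $I(Y;Z\mid X)=0$ into the displayed identity yields $I(X;Z) = I(Y;Z) + I(X;Z\mid Y) \ge I(Y;Z)$, completing the argument. As a sanity check one could alternatively route the proof through Lemma~\ref{lem:klmi}, writing each mutual information as an expected KL divergence and using that pushing forward through $g$ can only coarsen the conditioning; but the chain-rule bookkeeping above is the cleanest path and the one I would write up.
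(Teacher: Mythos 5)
Your proof is correct, and it takes a somewhat different route from the paper's. The paper argues directly with entropies: it posits an auxiliary variable $Y'$ capturing ``the remaining randomness in $X$'' so that $X = Y \cup Y'$, and then invokes the fact that conditioning reduces entropy to get $I(Y;Z) = H(Z) - H(Z|Y) \leq H(Z) - H(Z|Y \cup Y') = I(X;Z)$. You instead run the textbook data-processing argument: expand $I((X,Y);Z)$ by the chain rule in both orders, observe that $I(Y;Z \mid X) = 0$ because $Y$ is degenerate given $X$, and use non-negativity of $I(X;Z \mid Y)$. At the level of content the two are nearly equivalent --- non-negativity of conditional mutual information is the same fact as ``conditioning reduces entropy,'' and indeed in this setting $I(X;Z \mid Y) = H(Z \mid Y) - H(Z \mid X)$, which is exactly the gap the paper bounds --- but your write-up is arguably the cleaner one: it makes the Markov structure $Z \to X \to Y$ explicit, pinpoints exactly where the determinism hypothesis enters (annihilating $I(Y;Z \mid X)$), and avoids the paper's slightly informal device of decomposing $X$ as $Y \cup Y'$, which strictly speaking requires justification (one must construct such a $Y'$, which is not canonical for general, e.g.\ continuous, $X$). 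The paper's version buys brevity; yours buys rigor at the cost of one extra chain-rule identity.
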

\insertpostthmspacing
\begin{proof}
Let $Y'$ capture the remaining randomness in $X$ so that $X=Y\cup Y'$.
Then, since conditioning reduces entropy,
$I(Y;Z) = H(Z) - H(Z|Y) \leq H(Z) - H(Z|Y\cup Y') = I(X;Z)$.
\end{proof}

\section{Proofs}

\subsection{Notation and Set up}

In this subsection, we will introduce some notation, prove some basic lemmas,
and in general, lay the groundwork for our analysis.
$\PP,\EE$ denote probabilities and expectations.
$\PPt,\EEt$ denote probabilities and expectations when conditioned on
the actions and observations up to and including
time $t$, e.g. for any event $E$, $\PPt(E) = \PP(E|\datat)$.
For two data sequences $A,B$, $A\concat B$ denotes the concatenation of the two
sequences.
When $x\in\Xcal$, $\Yx$ will denote the random observation from
$\PP(Y|x,\theta)$.

%
Let $\datat\in\Dcalt$ be a data sequence of length $t$.
Then, $\Qpi(\datat,x,y)$ will denote the expected total penalty when we take action
$x\in\Xcal$, observe $y\in\Ycal$ and then execute policy $\policy$ for the remaining
$n-t-1$ steps.
That is,
\begin{align*}
\numberthis\label{eqn:Qdefn}
&\Qpi(\datat, x, y) = \EE\big[\sumpenl(\thetatrue, \datat\concat\{(x, y)\}\concat\Ftptn) \big] \\
&\hspace{0.05in}
= \sum_{j=1}^t\penl(\thetatrue, \dataj) +
  \penl(\thetatrue, \dataj\concat\{(x,y)\}) +
  \EE_{\Ftptn}\bigg[
  \sum_{j=t+2}^n \penl(\thetatrue, \dataj\concat\{(x,y)\}\concat
      \Fttnn{t+2}{j} ) \bigg].
\end{align*}
Here, the action-observation pairs collected by $\policy$
from steps $t+2$ to $n$ are $\Ftptn$.
The expectation is over the observations and any randomness in $\policy$.
While we have omitted for conciseness,
$\Qpi$ is a function of the true parameter $\thetatrue$.
Let $\dttpp{t}{\pi}$ denote the distribution of $\datat$ when following a policy $\pi$
for the first $t$ steps. 
We then have,
\begin{align*}
J(\thetatrue,\policy) = \EE_{\datat\sim\dttpp{t}{\policy}}
\big[
\EE_{X\sim\policy(\datat)}[
  \Qpp{\policy}(\datat, X, Y)]\big],
\numberthis \label{eqn:lossQ}
\end{align*}
where, recall, $\YX$ is drawn from $\PP(Y|X,\thetatrue)$.
The following Lemma decomposes the regret
$\lossn(\thetatrue,\policy) - \lossn(\thetatrue,\gopolicy)$
as a sum of terms which are convenient to analyse.
The proof is adapted from Lemma 4.3 in~\citet{ross2014reinforcement}.

\insertprethmspacing
\begin{lemma}
For any two policies $\policyone,\policytwo$,
\begin{align*}
&J(\thetatrue,\policyone) - J(\thetatrue,\policytwo) = \\
&\hspace{0.2in} \sum_{t=1}^n \EE_{\datatmo \sim \dtmopo}\left[
  \EE_{X\sim\policyone(\datatmo)}\left[
 \Qpt(\datatmo, X, \YX) \right]
 -\EE_{X\sim\policytwo(\datatmo)}\left[
 \Qpt(\datatmo, X, \YX) \right]
  \right]
\end{align*}
\label{lem:valuedecomp}
\begin{proof}
Let $\policy^t$ be the policy that follows $\policyone$ from time step $1$ to $t$, and
then executes policy $\policytwo$ from $t+1$ to $n$.
Hence, by~\eqref{eqn:lossQ}, 
\begin{align*}
J(\thetatrue, \policy^t) &=
\; \EE_{\datatmo\sim\dttpp{t-1}{\policyone}}
\big[\EE_{X\sim\policyone(\datatmo)}[\Qpp{\policytwo}(\datatmo, X, \YX)]\big] \\
&=\;
\EE_{\datat\sim\dttpp{t}{\policyone}}
\big[\EE_{X\sim\policytwo(\datat)}[\Qpp{\policytwo}(\datat, X, \YX)]\big].
\end{align*}
The claim follows from the observation,
$
J(\thetatrue, \policyone) - J(\thetatrue, \policytwo) =
J(\thetatrue, \policy^n) - J(\thetatrue, \policy^0) =
\sum_{t=1}^n J(\thetatrue, \policy^t) - J(\thetatrue, \policy^{t-1}).
$
\end{proof}
\end{lemma}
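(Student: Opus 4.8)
The plan is to run the standard hybrid-policy telescoping argument (the ``performance difference'' device), exactly as in the cited Lemma~4.3 of~\citet{ross2014reinforcement}. First I would introduce, for each $t \in \{0,1,\dots,n\}$, the interpolating policy $\policy^t$ that executes $\policyone$ on steps $1,\dots,t$ and then switches to $\policytwo$ on steps $t+1,\dots,n$. The two boundary cases give $\policy^n = \policyone$ and $\policy^0 = \policytwo$, so the quantity of interest decomposes by a telescoping sum, $J(\thetatrue,\policyone) - J(\thetatrue,\policytwo) = \sum_{t=1}^n \big( J(\thetatrue,\policy^t) - J(\thetatrue,\policy^{t-1}) \big)$. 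The whole lemma then reduces to computing a single consecutive difference $J(\thetatrue,\policy^t) - J(\thetatrue,\policy^{t-1})$ and checking that it equals the $t$-th summand in the claim.

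For each such difference I would invoke the $Q$-based expression for the loss in~\eqref{eqn:lossQ}, with the key observation that the correct continuation object is $\Qpp{\policytwo}$ (written $\Qpt$ in the statement): both $\policy^t$ and $\policy^{t-1}$ hand control to $\policytwo$ from step $t+1$ onward, so once the step-$t$ action and its observation are fixed, the expected remaining penalty is $\Qpp{\policytwo}(\datatmo, X, \YX)$ in both cases. Moreover $\policy^t$ and $\policy^{t-1}$ are identical on steps $1,\dots,t-1$ (both follow $\policyone$), so the induced distribution over the length-$(t-1)$ prefix is the common law $\dtmopo = \dttpp{t-1}{\policyone}$. The only place the two policies differ is the step-$t$ action: under $\policy^t$ it is drawn from $\policyone(\datatmo)$, whereas under $\policy^{t-1}$ it is drawn from $\policytwo(\datatmo)$. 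Subtracting the two $Q$-based expressions therefore yields precisely $\EE_{\datatmo \sim \dtmopo}\big[ \EE_{X \sim \policyone(\datatmo)}[\Qpt(\datatmo, X, \YX)] - \EE_{X \sim \policytwo(\datatmo)}[\Qpt(\datatmo, X, \YX)] \big]$, and summing over $t$ gives the claim.

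The main obstacle is the bookkeeping that justifies writing $J(\thetatrue,\policy^t)$ in the form that places the step-$t$ action under $\policyone$ rather than under $\policytwo$. Concretely, one must verify the re-indexing identity $\EE_{\datatmo\sim\dttpp{t-1}{\policyone}}\big[\EE_{X\sim\policyone(\datatmo)}[\Qpp{\policytwo}(\datatmo, X, \YX)]\big] = \EE_{\datat\sim\dttpp{t}{\policyone}}\big[\EE_{X\sim\policytwo(\datat)}[\Qpp{\policytwo}(\datat, X, \YX)]\big]$: the left side generates the first $t-1$ steps by $\policyone$ and the $t$-th action also by $\policyone$, which is exactly what it means to sample a length-$t$ prefix $\datat \sim \dttpp{t}{\policyone}$, after which $\policytwo$ governs all subsequent steps. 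This matching of the continuation policy inside the $Q$-function against the distributional description of the prefix is the one genuinely delicate point; the remainder is routine once the definition~\eqref{eqn:Qdefn} of $\Qpp{\policy}$ is unwound.
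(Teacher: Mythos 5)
Your proposal is correct and takes essentially the same route as the paper's own proof: the same hybrid policies $\policy^t$ interpolating between $\policyone$ and $\policytwo$, the same telescoping decomposition $J(\thetatrue,\policyone)-J(\thetatrue,\policytwo)=\sum_{t=1}^n\big(J(\thetatrue,\policy^t)-J(\thetatrue,\policy^{t-1})\big)$, and the same key re-indexing identity that writes $J(\thetatrue,\policy^t)$ both with the step-$t$ action drawn from $\policyone$ over a length-$(t-1)$ prefix and, equivalently, as a length-$t$ prefix of $\policyone$ followed by an action from $\policytwo$. Nothing is missing.
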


We will use Lemma~\ref{lem:valuedecomp} with $\policytwo$ as the policy $\gopolicy$
which knows $\thetatrue$ and with $\policyone$ as the policy $\policy$
whose regret we wish to bound.
For this, 
denote the action chosen by $\policy$ when it has seen data $\datatmo$
as $\Xt$ and that taken by $\gopolicy$ as $\Xpt$. 
By Lemma~\ref{lem:valuedecomp} and equation~\eqref{eqn:Qdefn} we have,
\begin{align*}
\EE_{\thetatrue}[\loss(\thetatrue,\policy) - \loss(\thetatrue,\gopolicy)]
&= \sum_{t=1}^n \EEtmo\Big[
 \Qpp{\gopolicy}(\datatmo,\Xt,\YXt)
 - \Qpp{\gopolicy}(\datatmo,\Xpt,\YXpt) \Big]. 
\end{align*}
Note that $\EEtmo$, which conditions on the data sequence
$\datatmo$ collected by $\policy$,
encompasses three sources of randomness.
The first is due to the randomness in the problem due to $\thetatrue\sim\pthetazero$,
the second is due to the observations $Y\sim\PP(\cdot|X,\thetatrue)$,
and the third 
is an external source of randomness $U$ used by the decision maker in
step~\ref{step:sampletheta} in Algorithm~\ref{alg:bdoe}.
While the actions $\{\Xt\}_t$ chosen depends on all sources of randomness, these
three sources are themselves independent.
For example, we can write $\EEtmo[\cdot] = \EEU[\EEtmorho[\EEtmoY[\cdot]]]$
where $\EEU$ captures the randomness by the decision maker,
$\EEtmorho$ due to the prior
and
$\EEtmoY$ due to the observations.
With this in consideration, 
define
\begin{align*}
\qt(x,y) = \EEtt{Y,t+1:n}[\Qpp{\gopolicy}(\datatmo,x,y)],
\numberthis
\label{eqn:qtdefn}
\end{align*}
where $\EEtt{Y,t+1:n}$ is the expectation over the observations from time step
$t+1$ to $n$.
$\qt$ is the expected total penalty when we
have data $\datatmo$ collected by $\policy$, then execute action $x$ at time $t$,
observe $y$ and then follow $\gopolicy$ for the remaining time steps.
Note that $\qt$ is a deterministic function of $\thetatrue$, $x$, and $y$
since $\gopolicy$ is a deterministic policy and the randomness of future observations
has been integrated out.
We can now write,
\begin{align*}
\EE[\loss(\thetatrue,\policy) - \loss(\thetatrue,\gopolicy)]
&= \sum_{t=1}^n \EEtmo\Big[
  \qt(\Xt, \YXt) - \qt(\Xpt, \YXpt)
  \Big],
\numberthis
\label{eqn:lossqt}
\end{align*}
where $\EEtmo$ inside the summation is over the randomness in $\thetatrue$,
the randomness of the policy in choosing $\Xt$ and
the observations $\YXt, \YXpt$.

\subsection{Proof of Theorem~\ref{thm:finiteactions}}

We will let $\Ptildetmo$ denote the distribution of $\Xt$ given $\datatmo$;
i.e. $\Ptildetmo(\cdot) = \PPtmo(\Xt=\cdot)$.
The density (Radon-Nikodym derivative) $\ptildetmo$ of $\Ptildetmo$ can be expressed as
$\ptildetmo(x) = \int_\Theta p_{\star}(x|\theta)p(\theta|\datatmo)\ud \theta$
where $p_\star(x|\theta)$ is the density of the maximiser given $\theta$
and $p(\theta|\datatmo)$ is the posterior density of $\theta$ conditoned
on $\datatmo$.
Hence, $\Xt$ has the same distribution as $\Xpt$;
i.e. $\PPtmo(\Xpt=\cdot) = \Ptildetmo(\cdot)$. 
This will form a key intuition in our analysis.
To this end, 
we begin with a technical result, whose proof is adapted from~\citet{russo2016thompson}.
We will denote by $\MItmo(A;B)$ the mutual information between two variables
$A,B$ under the posterior measure after having seen $\datatmo$;
i.e. $\MItmo(A;B) = \KL(\PPtmo(A,B)\|\PPtmo(A)\cdot\PPtmo(B))$.

\insertprethmspacing
\begin{lemma}
\label{lem:russolemmas}
Assume that we have collected a data sequence $\datatmo$.
Let the action taken by $\pspolicy$ at time instant $t$ with $\datatmo$ be $\Xt$ and the
action taken by $\gopolicy$ be $\Xpt$.
Then,
\begin{align*}
&
\EEtmo[\qt(\Xt,\YXt) -  \qt(\Xpt,\YXpt)]
=\sum_{x\in\Xcal} 
\big(\EEtmo[\qt(x,\Yx)] - \EEtmo[\qt(x,\Yx)|\Xpt=x]\big) \Ptildetmo(x) \\
&\MItmo(\Xpt; (\Xt, \YXt)) = \sum_{x_1,x_2\in\Xcal}
  \KL(\PPtmo(\Yxone|\Xpt=x_2)\| \PPtmo(\Yxone) )\,
  \Ptildetmo(x_1)\Ptildetmo(x_2)
\end{align*}
\begin{proof}
The proof for both results uses the fact that
$\PPtmo(\Xt=x) = \PPtmo(\Xpt=x) = \Ptildetmo(x)$.
For the first result,
\begin{align*}
&\EEtmo[\qt(\Xt,\YXt) -  \qt(\Xpt,\YXpt)] \\
&\hspace{0.02in} =
\sum_{x\in\Xcal}
\PPtmo(\Xt=x)\EEtmo[\qt(\Xt,\YXt)|\Xt=x]
-
\sum_{x\in\Xcal}
\PPtmo(\Xpt=x)\EEtmo[\qt(\Xpt,\YXpt)|\Xpt=x] \\
&\hspace{0.02in} =
\sum_{x\in\Xcal}
\PPtmo(\Xt=x)\EEtmo[\qt(x,\Yx)]
- \sum_{x\in\Xcal} \ud\PPtmo(\Xpt=x)\EEtmo[\qt(x,\Yx)|\Xpt=x] \\
&\hspace{0.02in} =
\sum_{x\in\Xcal}
\big(\EEtmo[\qt(x,\Yx)] - \EEtmo[\qt(x,\Yx)|\Xpt=x]\big) \Ptildetmo(x)  \,.
\end{align*}
The second step uses the fact that the observation $\Yx$ does not depend on
the fact that $x$ may have been chosen by $\pspolicy$;
this is because $\pspolicy$ makes its decisions based on past data $\datatmo$
and is independent of $\thetatrue$ given $\datatmo$.
$\Yx$ however can depend on the fact that $x$ may have been the action chosen by
$\gopolicy$ which knows $\thetatrue$.
For the second result,
\begin{align*}
&\MItmo(\Xpt; (\Xt, \YXt)) =
  \MItmo(\Xpt; \Xt) + \MItmo(\Xpt; \YXt|\Xt)
= \MItmo(\Xpt; \YXt|\Xt)
\\ &\hspace{0.1in} =
\sum_{x_1\in\Xcal} \PPtmo(\Xt=x_1)\,\MItmo(\Xt;\YXt|\Xt=x)
=
\sum_{x_1\in\Xcal} \Ptildetmo(x_1)\ud(x)\,\MItmo(\Xpt;\Yxone)
\\ &\hspace{0.1in} =
\sum_{x_1\in\Xcal} \Ptildetmo(x_1)\ud(x)
\sum_{x_2\in\Xcal} \PPtmo(\Xpt=x_2)
\,\KL(\PP(\Yxone|\Xpt=x_2)\|\PP(\Yxone))
\\ &\hspace{0.1in} =
  \sum_{x_1,x_2\in\Xcal}
  \KL(\PPtmo(\Yxone|\Xpt=x_2)\| \PPtmo(\Yxone))\,
  \ptildetmo(x_1)\ptildetmo(x_2)
\end{align*}
The first step uses the chain rule for mutual information.
The second step uses that $\Xt$ is chosen based on an external source of randomness and
$\datatmo$;
therefore, it
is independent of $\thetatrue$ and hence $\Xpt$ given $\datatmo$.
The fourth step uses that $\Yxone$ is independent of $\Xt$.
The fifth step uses lemma~\ref{lem:klmi} in Appendix~\ref{app:ancillary}.
\end{proof}
\end{lemma}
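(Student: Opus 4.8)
The plan is to establish both identities by carefully tracking which conditioning events may be dropped, exploiting the asymmetry that $\pspolicy$ draws its action from the posterior and is therefore independent of $\thetatrue$ given $\datatmo$, whereas $\gopolicy$ chooses its action as a deterministic function of $\thetatrue$. The single fact I would lean on repeatedly is the posterior sampling identity $\PPtmo(\Xt=x) = \PPtmo(\Xpt=x) = \Ptildetmo(x)$, which holds because $\pspolicy$ samples $\theta\sim p(\cdot\mid\datatmo)$ and then plays the $\theta$-optimal action, so its action law equals the posterior law of the $\thetatrue$-optimal action played by $\gopolicy$. Alongside this, I would use two independence facts: conditioning on $\{\Xt=x\}$ does not change the posterior law of $\thetatrue$ (and hence of $\Yx$ and of $\Xpt$) given $\datatmo$, while conditioning on $\{\Xpt=x\}$ generally does, since $\Xpt$ reveals information about $\thetatrue$.

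For the first identity I would expand each of the two expectations by conditioning on the realized action, writing $\EEtmo[\qt(\Xt,\YXt)] = \sum_{x}\PPtmo(\Xt=x)\,\EEtmo[\qt(x,\Yx)\mid \Xt=x]$ and analogously for the $\gopolicy$ term; the posterior sampling identity then replaces both action probabilities by $\Ptildetmo(x)$. The key asymmetric step is that in the $\pspolicy$ term I may drop the conditioning, $\EEtmo[\qt(x,\Yx)\mid \Xt=x] = \EEtmo[\qt(x,\Yx)]$, because $\Yx$ and the deterministic quantity $\qt(x,\Yx)$ are independent of the event $\{\Xt=x\}$ given $\datatmo$, whereas in the $\gopolicy$ term the conditioning must be retained. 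Subtracting termwise yields the claimed sum $\sum_{x}\big(\EEtmo[\qt(x,\Yx)] - \EEtmo[\qt(x,\Yx)\mid \Xpt=x]\big)\,\Ptildetmo(x)$.

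For the second identity I would first apply the chain rule $\MItmo(\Xpt;(\Xt,\YXt)) = \MItmo(\Xpt;\Xt) + \MItmo(\Xpt;\YXt\mid \Xt)$ and argue that the first term vanishes, since $\Xt$ is a function of $\datatmo$ and the decision maker's external randomness and is thus independent of $\thetatrue$, hence of $\Xpt$, given $\datatmo$. For the conditional term I would condition on $\{\Xt=x_1\}$, obtaining $\sum_{x_1}\Ptildetmo(x_1)\,\MItmo(\Xpt;\Yxone\mid \Xt=x_1)$, and then erase the conditioning event because $\Yxone$ depends only on $x_1$ and $\thetatrue$, not on whether $\pspolicy$ happened to select $x_1$; this gives $\MItmo(\Xpt;\Yxone\mid\Xt=x_1) = \MItmo(\Xpt;\Yxone)$. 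Finally, invoking Lemma~\ref{lem:klmi} with $\Xpt$ in the role of the conditioning variable rewrites $\MItmo(\Xpt;\Yxone) = \sum_{x_2}\Ptildetmo(x_2)\,\KL\big(\PPtmo(\Yxone\mid\Xpt=x_2)\,\|\,\PPtmo(\Yxone)\big)$, and combining the two sums delivers the stated double sum.

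The main obstacle is not any single computation but the disciplined bookkeeping of this independence structure: I must justify at each step exactly when the event $\{\Xt=x\}$ or $\{\Xt=x_1\}$ may be erased (because $\pspolicy$'s choice carries no information about $\thetatrue$ beyond $\datatmo$) versus when the event $\{\Xpt=x\}$ must be kept (because $\gopolicy$'s choice does). Getting the Lemma~\ref{lem:klmi} application right, in particular identifying which argument plays the role of the ``input'' whose law $\Ptildetmo$ appears outside the KL and which plays the role of the ``observation'' inside it, is the most error-prone point; I would sanity-check it against the degenerate case in which $\Yxone$ is independent of $\thetatrue$, where $\Xpt$ and $\Yxone$ become independent, so that both the mutual information and every KL term vanish and the identity reduces consistently to $0=0$.
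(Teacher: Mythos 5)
Your proposal is correct and follows essentially the same route as the paper's own proof: the posterior sampling identity $\PPtmo(\Xt=x)=\PPtmo(\Xpt=x)=\Ptildetmo(x)$, dropping the conditioning on $\{\Xt=x\}$ (since $\pspolicy$'s action is independent of $\thetatrue$ given $\datatmo$) while retaining it on $\{\Xpt=x\}$, the chain rule with $\MItmo(\Xpt;\Xt)=0$, and Lemma~\ref{lem:klmi} with $\Xpt$ as the conditioning variable. Your sanity check on the degenerate case and the explicit note that $\qt(x,\Yx)$ depends on $\thetatrue$ are sound additions, but the argument itself is the paper's.
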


The next Lemma uses the conditions on $\penl$ given in
Condition~\ref{cond:structural_cond}
to show that $\qt$~\eqref{eqn:qtdefn} is bounded.
This essentially establishes that the effect of a single bad
action is bounded on the long run penalties.

\insertprethmspacing
\begin{lemma}
\label{lem:Bbound}
Let $B$ be as given in Theorem~\ref{thm:finiteactions} for any of
conditions~\ref{cond:instantaneous}-\ref{cond:moreisbetter}.
Then, $\;\sup \qt - \inf \qt \leq B$.
\begin{proof}
In this proof, $(x,y), (x',y')\in\Xcal\times\Ycal$ will be two pairs of
action-observations.
Denote the action-observations pairs when following $\gopolicy$ after $(x,y)$
by $\Htn$, i.e. $\Htn$ starts with $(x, y)$ and has length $n-t+1$.
Similarly define $\Hptn$ for $(x',y')$. Then
$\qt(x,y) - \qt(x',y') = \EEtt{Y,t+1:n}[r_t]$, where,
\begin{align*}
r_t &= \sumpenl(\thetatrue, \datatmo\concat\Htn)
                  - \sumpenl(\thetatrue, \datatmo\concat\Hptn)
\\
&= \sum_{j=t}^n \Big(\penl(\thetatrue, \datatmo\concat\Htj)
                  - \penl(\thetatrue, \datatmo\concat\Hptj) \Big)
\label{eqn:rtdefn} \numberthis
\end{align*}
We will now prove $\qt(x,y) - \qt(x',y') \leq B$ separately for each condition.

\textbf{Condition~\ref{cond:instantaneous}:}
Under this setting,  $\gopolicy$ which knows $\thetatrue$,
will behave identically after the end of the current episode.
This is  because the penalty
at the next episode will not depend on the data collected during the current episode.
Therefore, the summation in~\eqref{eqn:rtdefn} is from $t$ to the end
of the episode $s$.
We hence have, $r_t \leq s - t \leq H = B$.

\textbf{Condition~\ref{cond:recoverability}:}
Denote $\epsilon_j = \penl(\thetatrue, \datatmo\concat\Htj)
- \penl(\thetatrue, \datatmo\concat\Hptj)$.
$\epsilon_j$ is a random quantity for $j\geq t+1$ as it depends on the observations.
We have $r_t = \sum_{j=t}^n \epsilon_j$.
Observe that, at time step $j$, $\gopolicy$ chooses the action to maximise
$\EEtt{\Yx}[\penl(\thetatrue,\datatmo\concat\Htj\concat\{(x,\Yx)\}]$
when starting with $(x,y)$, and 
$\EEtt{\Yx}[\penl(\thetatrue,\datatmo\concat\Hptj\concat\{(x,\Yx)\}]$
when starting with $(x',y')$.
Hence, condition~\ref{cond:recoverability} implies that,
$\EE_{\Ytt{j+1}}[\epsilon_{j+1}] \leq \alpha \epsilon_j$.
An inductive argument leads us to,
$\EE_{\Ytt{t+1},\dots,\Ytt{j}}[\epsilon_j] \leq \alpha^{j-t}\epsilon_t$.
However, since $\penl$ maps to $[0,1]$, $\epsilon_t \leq 1$.
Hence $\EEtt{Y,t+1:n}[r_t] \leq \sum_j\alpha^{j-t}\epsilon_t \leq 1/(1-\alpha)=B$.

\textbf{Condition~\ref{cond:moreisbetter}:}
For the purposes of this analysis, we will allow a decision maker to take
no action at time $t-1$ and denote this by $\emptyset$,
i.e. $\datatmo\concat\emptyset\concat\Ftpon$ means the action observation pairs were
$\datatmo$ from time $1$ to $t-1$, then there was no action at time $t$ and then
from time $t+1$ to $n$, the action observation pairs were $\Ftpon$.
In doing so, the decision maker incurs a penalty of $\penl(\thetatrue,\datatmo)$
at step $t+1$.
Correspondingly, we have,
\begin{align*}
\sumpenl(\thetatrue,\datatmo\concat\emptyset\concat\Ftptn)
 = \sum_{j=1}^{t-1}\penl(\thetatrue,\dataj) + \penl(\thetatrue,\datatmo)
  + \sum_{j=t+2}^n\penl(\thetatrue,\datatmo\concat\Fttnn{t+1}{j}).
\end{align*}
Adding and subtracting $
\sumpenl(\thetatrue, \datatmo\concat\emptyset\concat\Hptnmo)\big)
$ to $r_t$ we have,
\begin{align*}
r_t =\;&
\big(\sumpenl(\thetatrue, \datatmo\concat\Htn) -
\sumpenl(\thetatrue, \datatmo\concat\emptyset\concat\Hptnmo)\big)
\numberthis\label{eqn:rtsumtwo}
\;+ \\
&
\big(
\sumpenl(\thetatrue, \datatmo\concat\emptyset\concat\Hptnmo) -
\sumpenl(\thetatrue, \datatmo\concat\Hptn)
\big).
\end{align*}

The second term above can be bounded by $1$ since $\lambda$ maps to $[0,1]$.
\begin{align*}
&\sumpenl(\thetatrue, \datatmo\concat\emptyset\concat\Hptnmo) - 
\sumpenl(\thetatrue, \datatmo\concat\Hptn)
\numberthis\label{eqn:rtone}
\\
&\hspace{0.3in} =
\penl(\thetatrue,\datatmo) + \sum_{j=t+1}^{n-1}\penl(\thetatrue,\datatmo\concat\Hptj)
                           - \sum_{j=t+1}^{n}\penl(\thetatrue,\datatmo\concat\Hptj) \\
&\hspace{0.3in} =
 \penl(\thetatrue,\datatmo) - \penl(\thetatrue, \datatmo\concat\Hptn) \leq 1.
\end{align*}
For the first term, we have,
\begin{align*}
&\big(\sumpenl(\thetatrue, \datatmo\concat\Htn) -
\sumpenl(\datatmo\concat\emptyset\concat\Hptnmo)\big)
\numberthis\label{eqn:rttwo}
\\
&\hspace{0.05in}=
  \big(\penl(\thetatrue, \datatmo\concat\{(x,y)\}) - \penl(\thetatrue, \datatmo)\big)
  \;+\;
  \sum_{j=t+1}^n\big(\penl(\thetatrue, \datat\concat\Htj) -
                \penl(\thetatrue, \datat\concat\Hptjmo)\big).
\end{align*}
Recall that
the actions in $\Htj$ are chosen to maximise the expected future rewards.
By condition~\ref{cond:moreisbetter} and since
$\datatmo\cup\{(x,y)\}\supset\datatmo$, each of the 
$n-t$ terms in the RHS summation is less than or equal to zero in expectation
over the observations.
Since $
\penl(\thetatrue, \datatmo\concat\{(x,y)\}) -  \penl(\thetatrue, \datatmo) \leq 1$,
the above term is at most $1$ in expectation over $\EEtt{Y,t+1:n}$.
Combining this with~\eqref{eqn:rtone} gives us
$\EEtt{Y,t+1:n}[r_t] \leq 2 = B$.
\end{proof}
\end{lemma}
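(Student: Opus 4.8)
The plan is to bound the oscillation by controlling the difference $\qt(x,y) - \qt(x',y')$ for arbitrary action-observation pairs $(x,y),(x',y')\in\Xcal\times\Ycal$, since $\sup\qt - \inf\qt$ is exactly the supremum of such differences. First I would observe that in $\Qpp{\gopolicy}(\datatmo,x,y)$ the first $t-1$ penalty terms depend only on the fixed prefix $\datatmo$, so they cancel in the difference. Writing $\Htj$ (resp.\ $\Hptj$) for the $\gopolicy$-rollout of length $j-t+1$ that begins with $(x,y)$ (resp.\ $(x',y')$), the difference reduces to $\EEtt{Y,t+1:n}[r_t]$ with $r_t = \sum_{j=t}^n\big(\penl(\thetatrue,\datatmo\concat\Htj) - \penl(\thetatrue,\datatmo\concat\Hptj)\big)$, a sum of per-step penalty gaps. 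The remaining task is to bound $\EEtt{Y,t+1:n}[r_t]$ by $B$ under each condition, using throughout that $\penl$ takes values in $[0,1]$ so each gap has magnitude at most $1$.

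Under Condition~\ref{cond:instantaneous} I would use that $\gopolicy$, which knows $\thetatrue$, behaves identically in both rollouts once the current episode ends: beyond an episode boundary $s$ (a multiple of $H$) the penalty no longer sees data from the current episode, so $\gopolicy$'s deterministic myopic choices, and hence the penalties, coincide for $j>s$. Only the at-most-$H$ terms up to $s$ survive, each bounded by $1$, giving $r_t\leq s-t\leq H = B$. Under Condition~\ref{cond:recoverability} I would track the per-step gap $\epsilon_j = \penl(\thetatrue,\datatmo\concat\Htj) - \penl(\thetatrue,\datatmo\concat\Hptj)$, so $r_t = \sum_{j=t}^n\epsilon_j$ with $\epsilon_t\leq 1$ deterministic. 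Since at each step $\gopolicy$ minimises the next-step expected penalty from its current data, applying the recoverability inequality with $D_1 = \datatmo\concat\Htj$ and $D_2 = \datatmo\concat\Hptj$ (for which $\penl(\thetatrue,D_1) = \penl(\thetatrue,D_2) + \epsilon_j$) identifies both sides with the expected step-$(j+1)$ penalties along the two rollouts, yielding a one-step contraction $\EE[\epsilon_{j+1}\mid\epsilon_j]\leq\alpha\epsilon_j$ in expectation over the step-$j$ observation. Iterating the tower property gives $\EE[\epsilon_j]\leq\alpha^{j-t}$, and summing the geometric series bounds $\EEtt{Y,t+1:n}[r_t]\leq 1/(1-\alpha) = B$.

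The case I expect to be the main obstacle is Condition~\ref{cond:moreisbetter}, which offers no direct per-step comparison and so needs a coupling device. The plan is to introduce a fictitious rollout that takes no action at time $t$ (incurring the penalty $\penl(\thetatrue,\datatmo)$ at the next step) and thereafter follows $\gopolicy$, and to split $r_t$ by adding and subtracting this rollout's cumulative penalty. The piece comparing the null rollout with the $(x',y')$-rollout telescopes to $\penl(\thetatrue,\datatmo) - \penl(\thetatrue,\datatmo\concat\Hptn)\leq 1$. The piece comparing the $(x,y)$-rollout with the null rollout has leading term $\penl(\thetatrue,\datatmo\concat\{(x,y)\}) - \penl(\thetatrue,\datatmo)\leq 1$, while each remaining term pairs the $(x,y)$-rollout, whose data strictly extends the null rollout's data, against the null rollout; Condition~\ref{cond:moreisbetter} then makes each such term nonpositive in expectation. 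The delicate point is lining up the action indices of the two rollouts so that they form genuine prefix/extension pairs to which Condition~\ref{cond:moreisbetter} applies termwise; granting this bookkeeping, the two pieces sum to at most $2$ in expectation, so $\EEtt{Y,t+1:n}[r_t]\leq 2 = B$.
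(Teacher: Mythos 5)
Your proposal is correct and follows essentially the same route as the paper's proof: the same reduction of $\sup\qt - \inf\qt$ to the per-step gap sum $r_t$ (with the prefix terms cancelling), the same episode-boundary argument for condition~\ref{cond:instantaneous}, the same $\alpha$-contraction of $\epsilon_j$ with a geometric series for condition~\ref{cond:recoverability}, and, for condition~\ref{cond:moreisbetter}, exactly the paper's device of a fictitious null action at time $t$ with the add-and-subtract split into a telescoping piece bounded by $1$ and a piece whose off-by-one prefix/extension pairing makes the remaining terms nonpositive in expectation. The index bookkeeping you flag as delicate is handled in the paper precisely as you anticipate, comparing $\Htj$ against $\Hptjmo$ so that each pair satisfies the prefix relation required by the condition.
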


We are now ready to prove theorem~\ref{thm:finiteactions}.

\textbf{\emph{Proof of Theorem~\ref{thm:finiteactions}:}}
Using the first result of Lemma~\ref{lem:russolemmas}, we have,
\begingroup
\allowdisplaybreaks
\begin{align*}
&\EEtmo[\qt(\Xt,\YXt) - \qt(\Xpt,\YXpt)]^2
\\ &\hspace{0.2in}=
\bigg(\sum_{x\in\Xcal}\Ptildetmo(x)
\big(\EEtmo[\qt(x,\Yx)] - \EEtmo[\qt(x,\Yx)|\Xpt=x]\big)
\bigg)^2
\\ &\hspace{0.2in}\leq
|\Xcal| \sum_{x\in\Xcal}
\Ptildetmo(x)^2\big(\EEtmo[\qt(x,\Yx)] - \EEtmo[\qt(x,\Yx)|\Xpt=x]\big)^2
\\ &\hspace{0.2in}\leq
|\Xcal| \sum_{x_1,x_2\in\Xcal}
\Ptildetmo(x_1)\Ptildetmo(x_2)
\big(\EEtmo[\qt(x_1,\Yxone)] - \EEtmo[\qt(x_1,\Yxone)|\Xpt=x_2]\big)^2
\\ &\hspace{0.2in}\leq
\frac{|\Xcal|B^2}{2} \sum_{x_1,x_2\in\Xcal}
\Ptildetmo(x_1)\Ptildetmo(x_2)
\KL(\PPtmo(\Yxone|\Xpt=x_2)\| \PPtmo(x_1))
\\ &\hspace{0.2in}=\frac{1}{2}
|\Xcal|B^2 \MItmo(\Xpt;(\Xt, \YXt))
\,\leq\,
\frac{1}{2}
|\Xcal|B^2 \MItmo(\thetatrue;(\Xt, \YXt))
\end{align*}
\endgroup
Here, the second step uses the Cauchy-Schwarz inequality and
the third step uses the fact that the previous line can be viewed as the
diagonal terms in a sum over $x_1,x_2$.
The fourth step uses a version of Pinsker's inequality given in
Lemma~\ref{lem:pinsker} of Appendix~\ref{app:ancillary}
and the fifth step uses the second result of Lemma~\ref{lem:russolemmas}.
The last step uses Lemma~\ref{lem:misuperset} and the fact that
$\Xpt$ is a deterministic function of $\thetatrue$ given $\datatmo$.
Now, using~\eqref{eqn:lossqt} and the Cauchy-Schwarz inequality we have,
\begin{align*}
\EE[\loss(\thetatrue,\pspolicy) - \loss(\thetatrue,\gopolicy)]^2
&\leq n \sum_{t=1}^n
\frac{1}{2}
|\Xcal|B^2 \MItmo(\thetatrue;(\Xt, \YXt))
=
\frac{1}{2}
|\Xcal|B^2 \MI(\thetatrue;\datan)
\end{align*}
Here the last step uses the chain rule of mutual information in the following form,
\begin{align*}
\sum_t\MItmo(\thetatrue;(\Xt,\YXt)) 
= \sum_t\MI(\thetatrue;(\Xt,\YXt)|\{(\Xj,\YXj)\}_{j=1}^{t-1}) 
= \MI(\thetatrue; \{(\Xj,\YXj)\}_{j=1}^{n}).
\end{align*}
The claim follows from the observation,
$\MI(\thetatrue; \datan)\leq \IGn$.
\hfill
\qedsymbol

\subsection{Proof of Theorem~\ref{thm:submodthm}}

Let $D$ be the data sequence collected by a policy $\policy$.
For brevity, let
$\penlbar(D) = \EE[\penl(\thetatrue, D)]$ denote the expected penalty
when executing policy $\policy$ for given $\thetatrue$.
Note that $\penlbar$ is a function of the policy $\policy$.
For example,
if $D_1, D_2$ was collected by policies $\policyone,\policytwo$,
$\penlbar(D_1\concat D_2)$ will denote the expected penalty of
the policy which executes $\policyone$ for $|D_1|$ steps, then
starts executing $\policytwo$ without considering the data collected by $\policyone$.
We begin with the following Lemma which shows that $\gopolicy$ performs as well as
the globally optimal adaptive policy $\uopolicy$ up to a constant factor.
Note that both $\gopolicy$ and $\uopolicy$ know $\thetatrue$.

\insertprethmspacing
\begin{lemma}
\label{lem:gouobound}
Let $\uodatam$ be the data collected $\uopolicy$ in $m$ steps
and $\godatan$ be the data collected by $\gopolicy$ in $n$ steps.
Then, under condition~\ref{cond:submod},
\[
\penlbar(\godatan) \leq \penlbar(\uodatam) + 
e^{-\frac{n}{m}} \left( 1- \penlbar(\uodatam)\right)
\]
\begin{proof}
The proof follows the analysis of of similar myopic algorithms under
submodularity assumptions~\citep{goundan2007revisiting,nemhauser1978analysis}.
We begin with the following calculations for $t< n$.
For $t\leq n$, let $\godatat$ be the first $t$ points collected by $\gopolicy$,
For $j\leq m$, let $\uodataj$ be the first $j$ points collected by $\uopolicy$,
and $\uoXj$ be the $j$\ssth point collected by $\uopolicy$.
We have,
\begin{align*}
&\penlbar(\uodatam) \geq  \penlbar(\godatat\concat\uodatam)
=\penlbar(\godatat) +
  \sum_{j=1}^m \Big(\penlbar(\godatat \concat \uodataj) -
  \penlbar(\godatat \concat \uodatajmo)\Big)
\label{eqn:penlbarbound}\numberthis
\\
&\hspace{0.1in}
\geq\penlbar(\godatat) + \sum_{j=1}^m 
\Big(\penlbar(\godatat \concat \{(\uoXj,\YuoXj)\})
  - \penlbar(\godatat)\Big)
\geq\penlbar(\godatat) + \sum_{j=1}^m \Big(\penlbar(\godatatpo) - \penlbar(\godatat) \Big)
\end{align*}
Here, the first step uses monotonicity on $\penl$ (condition~\ref{cond:submod}),
and the second step
is a telescoping sum.
The third step uses the diminishing returns property in condition~\ref{cond:submod}
noting that $\godatat \concat \uodatajmo \supset \godatat$;
note that $\uoXj$ is the last element of $\uodataj$.
The last step uses that $\gopolicy$ chooses the best next action in expectation, and that
it knows $\thetatrue$, 
and hence $\EE[\penl(\thetatrue, \godatatpo)|\datat]
\leq \EE[\penl(\thetatrue, \godatat\cup\{(x,\Yx)\})|\datat]$
for all actions $x\in\Xcal$.

Now let $\delta_t = \penlbar(\uodatam) - \penlbar(\godatat)$.
Equation~\eqref{eqn:penlbarbound} takes the form,
$-\delta_t \geq m(\delta_{t+1} - \delta_t)$
which implies $\delta_{t+1}\leq (1-1/m)\delta_t$.
Applying this recursively 
to obtain 
$\delta_n \leq \big(1-1/m\big)^n \delta_0 \leq e^{-n/m}\delta_0$
and observing that $\delta_0 = \penl(\emptyset)
-\penl(\uodatam) \leq 1 - \penl(\uodatam)$ yields the result.
\end{proof}
\end{lemma}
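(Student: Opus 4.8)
The plan is to recognise this as the classical greedy-versus-optimal guarantee for monotone submodular maximisation (in the Nemhauser--Wolsey--Fisher style), transported to the adaptive Bayesian setting and adapted to the asymmetry that $\gopolicy$ runs for $n$ steps while $\uopolicy$ runs for only $m$. My first move is to pass from penalties to rewards. Writing the expected reward as $\nu(D) = 1 - \penlbar(D)$, Condition~\ref{cond:submod} says exactly that $\nu$ is monotone nondecreasing in the data (penalty decreases as data is added) and has diminishing marginal returns. A short algebraic check shows the target inequality $\penlbar(\godatan) \leq \penlbar(\uodatam) + e^{-n/m}(1-\penlbar(\uodatam))$ is equivalent to the coverage bound $\nu(\godatan) \geq (1 - e^{-n/m})\,\nu(\uodatam)$. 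So it suffices to establish this reward inequality and substitute $\nu = 1 - \penlbar$ at the end; the benefit is that every quantity in play stays nonnegative, which keeps the direction of each inequality unambiguous.

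Next I would fix an arbitrary prefix $\godatat$ of greedy's trajectory and derive a one-step improvement guarantee via the standard four-step chain. First, monotonicity gives $\nu(\uodatam) \leq \nu(\godatat \concat \uodatam)$, since prepending $\godatat$ cannot decrease the reward. Second, I telescope $\nu(\godatat \concat \uodatam) - \nu(\godatat)$ into the sum over $j=1,\dots,m$ of the marginal gains of $\uopolicy$'s actions $\uoXj$ on top of $\godatat \concat \uodatajmo$. Third, adaptive submodularity, applied with the prefix relation $\godatat \dssubset \godatat \concat \uodatajmo$, bounds each such marginal by the gain of the same action $\uoXj$ applied to $\godatat$ alone. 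Fourth, the definition of $\gopolicy$ as the myopic minimiser of the next-step expected penalty means greedy's actual gain $\nu(\godatatpo) - \nu(\godatat)$ is at least that of any single action, in particular $\uoXj$. Chaining these yields
\[
\nu(\uodatam) - \nu(\godatat) \;\leq\; m\big(\nu(\godatatpo) - \nu(\godatat)\big).
\]

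Setting $\delta_t := \nu(\uodatam) - \nu(\godatat)$, this rearranges to the contraction $\delta_{t+1} \leq (1 - 1/m)\,\delta_t$. Unrolling over the $n$ greedy steps gives $\delta_n \leq (1-1/m)^n\delta_0 \leq e^{-n/m}\delta_0$, and since $\nu(\emptyset) \geq 0$ I bound $\delta_0 = \nu(\uodatam) - \nu(\emptyset) \leq \nu(\uodatam)$. This delivers $\nu(\uodatam) - \nu(\godatan) \leq e^{-n/m}\,\nu(\uodatam)$, i.e.\ the coverage bound, which after substituting $\nu = 1 - \penlbar$ is exactly the claimed statement.

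The main obstacle I anticipate is not any one computation but keeping the bookkeeping exact in two places. First, adaptive submodularity must be invoked on the correct pair of prefixes: the diminishing-returns inequality compares the marginal of $\uoXj$ on $\godatat \concat \uodatajmo$ with its marginal on $\godatat$, which needs the genuine prefix relation $\godatat \dssubset \godatat \concat \uodatajmo$ and not merely a superset relation. Second, one must be careful about the meaning of the concatenated object $\penlbar(\godatat \concat \uodatam)$, namely ``run $\gopolicy$ for $t$ steps and then $\uopolicy$ for $m$ steps,'' since the telescoping in step two relies on this interpretation of $\nu$ as a function of a sequentially executed policy. The asymmetric horizons ($n$ versus $m$) are precisely what convert the usual $1-1/e$ ratio into $1-e^{-n/m}$, because the recursion is iterated $n$ times against a gap measured relative to the $m$-step optimum; ensuring the sign of $\delta_t$ and the orientation of each of the four inequalities are mutually consistent is the crux, and the reward reformulation $\nu = 1-\penlbar$ is what I would lean on to make this transparent.
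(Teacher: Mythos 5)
Your proof is correct and follows essentially the same route as the paper's: the identical four-step chain (monotonicity, telescoping over $\uopolicy$'s actions, adaptive submodularity applied with the prefix relation $\godatat \dssubset \godatat\concat\uodatajmo$, and $\gopolicy$'s one-step optimality), followed by the same contraction $\delta_{t+1}\leq(1-1/m)\delta_t$ unrolled for $n$ steps and seeded with $\delta_0$. Your only deviation---working with the reward $\nu = 1-\penlbar$ rather than the penalty---is cosmetic, though it does tidy the sign bookkeeping relative to the paper's own write-up, which defines $\delta_t = \penlbar(\uodatam)-\penlbar(\godatat)$ but then evaluates $\delta_0$ with the opposite sign, an inconsistency your formulation avoids.
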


\begin{proof}[\textbf{Proof of Theorem~\ref{thm:submodthm}}]
Let $\datan$ be the data collected by $\pspolicy$.
By monotonicity of $\penlbar$, and the fact that the minimum is smaller than the average
we have $\penlbar(\datan) \leq \frac{1}{n}\sum_{t=1}^n \penlbar(\datat)$.
Hence, 
\begin{align*}
\EE[\penl(\thetatrue, \datan)] &\leq
\EE\left[ \frac{1}{n} \sum_{t=1}^n \penlbar(\godatat) \right]
 + B\sqrt{\frac{|\Xcal|\IGn}{2n}} \\
&\leq
\EE\left[ \penlbar(\uodatam) + 
\frac{1}{n}\sum_{t=1}^n e^{-t/m} \big(1 - \penlbar(\uodatam) \big)
\right]
 + B\sqrt{\frac{|\Xcal|\IGn}{2n}} \\
&\leq
\EE[\penl(\thetatrue, \uodatam)] + 
(1 - \EE[\penl(\thetatrue, \uodatam)])
\frac{1}{n}\sum_{t=1}^n e^{-t/m}
 + B\sqrt{\frac{|\Xcal|\IGn}{2n}} \\
&\leq
\EE[\penl(\thetatrue, \uodatam)] + 
(1 - \EE[\penl(\thetatrue, \uodatam)])
\frac{(1+m)e^{-1/m}}{n}
 + B\sqrt{\frac{|\Xcal|\IGn}{2n}}.
\end{align*}
Here, the first step uses Theorem~\ref{thm:finiteactions}, 
the second step uses Lemma~\ref{lem:gouobound} for each $t$.
The third step rearranges terms and the last step 
bounds the sum by an integral to obtain,
\[
\sum_{t=1}^n e^{-t/m} \leq e^{-1/m} + \int_1^\infty e^{-t/m}\ud t
\leq e^{-1/m} + m  e^{-1/m}.
\]
Now, using $m=\gamma n$ and the fact that $1+x \leq e^x$ for $x>0$
we have 
$(1+m)e^{-1/m}/n = \gamma(1+\frac{1}{\gamma n})e^{-1/(\gamma n)}
\leq \gamma$.
The claim follows from the fact 
$\mu = 1-\penl$.
\end{proof}

\end{document}